\def\eqref#1{equation~\ref{#1}}
\def\1{\bm{1}}
\DeclareMathAlphabet{\mathsfit}{\encodingdefault}{\sfdefault}{m}{sl}
\SetMathAlphabet{\mathsfit}{bold}{\encodingdefault}{\sfdefault}{bx}{n}
\crefname{ineq}{inequ.}{inequ.}
\crefname{equation}{Eq.}{Eqs.}
\crefname{theorem}{Thm.}{Thm.}
\crefname{proposition}{Prop.}{Prop.}
\crefname{claim}{Claim}{Claims}
\crefname{lemma}{Lemma}{Lemmas}
\crefname{assumption}{Asm.}{Asm.}
\crefname{figure}{Fig.}{Fig.}
\crefname{appendix}{Appendix}{Appendices}
\crefname{table}{Table}{Tables}
\crefname{section}{Section}{Sections}
\newtheorem{proposition}{Proposition}
\newcommand{\rebuttal}[1]{#1}
\title{Multilinear Operator Networks}
\author{Yixin Cheng$^{1}$, Grigorios G. Chrysos$^{2}$, Markos Georgopoulos$^{}$, \textbf{Volkan Cevher}$^{1}$ \\
$^1$LIONS - École Polytechnique Fédérale de Lausanne \hspace{1em} $^2$University of Wisconsin-Madison
}
\providecommand{\realnum}{\ensuremath{\mathbb{R}}}
\providecommand{\naturalnum}{\ensuremath{\mathbb{N}}}
\newcommand{\modelnamePM}{MONet}
\newcommand{\modelnamePMS}{\modelnamePM-T}
\newcommand{\modelnamePMB}{\modelnamePM-S}
\newcommand{\imagenet}{ImageNet1K}
\newcommand{\abdataset}{ImageNet100}
\newcommand{\layer}{Mu-Layer}
\newcommand{\block}{Poly-Block}
\definecolor{small}{rgb}{0.99, 0.88, 0.88}
\definecolor{base}{rgb}{0.88, 0.99, 0.88}
\definecolor{big}{rgb}{0.88,0.88,0.99}
\newcommand{\modelnamePI}{$\Pi$-Nets}
\newcommand{\modelnamedense}{$\mathcal{D}$-PolyNets}  %
\newcommand{\newmodelnamePI}{$\mathcal{R}$-PolyNets}
\newcommand{\resnet}{ResNet}
\providecommand{\shortpolynamesingle}{PN}
\providecommand{\shortpolyname}{\shortpolynamesingle s}
\newcommand{\myth}{\ensuremath{^{\text{th}}}}
\newcommand{\period}{\char`.\@ifnextchar'{}{\@ifnextchar.{}{ }}}
\newcommand{\periodcomma}{\char`,\@ifnextchar'{}{\@ifnextchar.{}{ }}}
\begin{document}

\maketitle

\begin{abstract}
Despite the remarkable capabilities of deep neural networks in image recognition, the dependence on activation functions remains a largely unexplored area and has yet to be eliminated.
On the other hand, Polynomial Networks is a class of models that does not require activation functions, but have yet to perform on par with modern architectures. In this work, we aim close this gap and propose \modelnamePM, which relies \emph{solely} on multilinear operators. The core layer of \modelnamePM, called \layer, captures multiplicative interactions of the elements of the input token. \modelnamePM{} captures high-degree interactions of the input elements and we demonstrate the efficacy of our approach on a series of image recognition and scientific computing benchmarks. The proposed model outperforms prior polynomial networks and performs on par with modern architectures. We believe that \modelnamePM{} can inspire further research on models that use entirely multilinear operations.

\end{abstract}

\section{Introduction}
\label{sec:poly_mixer_introduction}

Image recognition has long served as a crucial benchmark for evaluating architecture designs, \rebuttal{including the seminal \resnet~\citep{he2016resnet} and MLP-Mixer \citep{tolstikhin2021mlp}. As architectures are applied to new applications, there are additional requirements for the architecture design. For instance, encryption is a key requirements in safety-critical applications~\citep{caruana2015intelligible}. 
Concretely, the Leveled Fully Homomorphic Encryption (LFHE)~\citep{brakerski2014leveled}, can provide a high level of security for sensitive information. The core limitation of FHE (and especially LFHE) is that they support only addition and multiplication as operations. That means that traditional neural networks cannot fit into such privacy constraints owing to their dependence on elementwise activation functions, making developments in MLP-Mixer and similar models invalid for many real-world applications.} Therefore, new designs that can satisfy those constraints and still achieve high accuracy on image recognition are required. 

\rebuttal{A core advantage of Polynomial Nets (\shortpolyname), that express the output as high-degree interactions between the input elements~\citep{ivakhnenko1971polynomial, shin1991pi-sigma, chrysos2020pinet}, is that they can satisfy constraints, such as encryption or interpretability~\citep{dubey2022scalable}.} However, a major drawback of \shortpolyname{} so far is that they fall short of the performance of modern architectures on standard machine learning benchmarks, such as image recognition. This is precisely the gap we aim to close in this work. 

We introduce a class of \shortpolyname, dubbed \textcolor{blue}{M}ultilinear \textcolor{blue}{O}perator \textcolor{blue}{Net}work (\modelnamePM), which is based solely on multilinear operations\footnote{\rebuttal{The terminology on multilinear operations arises from the multilinear algebra. Concretely, we follow the terminology of the seminal paper of \cite{kolda2009tensor}.}}. The core layer captures multiplicative interactions within the token elements\footnote{Consistent with the recent literature of MLP-based models and transformers~\citep{dosovitskiy2020ViT}, we consider sequences of tokens as inputs. In the case of images, a token refers to a (vectorized) patch of the input image.}. The multiplicative interaction is captured using two parallel branches, each of which assumes a different rank to enable different information to flow. By composing sequentially such layers, the model captures high-degree interactions between the input elements and can predict the target signal, e.g., class in the case of image recognition. 

Concretely, our contributions can be summarized as: 
\begin{itemize}
    \item We propose \layer, a new module which uses purely multilinear operations to capture multiplicative interactions. We showcase how this module can serve as a plug-in replacement to standard MLP. 
    
    \item We introduce \modelnamePM{}, which captures high-degree interactions between the input elements. To our knowledge, this is the first network that obtains a strong performance on challenging benchmarks. 
    
    \item We conduct a thorough evaluation of the proposed model across standard image recognition benchmarks to show the efficiency and effectiveness of our method. \modelnamePM{} significantly improves the performance of the prior art on polynomial networks, while it is on par with other recent strong-performing architectures.
\end{itemize}

We intend to release the source code upon the acceptance of the paper to enable further improvement of models relying on linear projections.

\section{Related work}
\label{sec:poly_mixer_related}

We present a brief overview of the most closely related categories of MLP-based and polynomial-based architectures from the vast literature of deep learning architectures. For a detailed overview, the interested reader can find further information on dedicated surveys on image recognition~\citep{lu2007survey,plested2022deep,peng2022survey}. %

\textbf{MLP models}: %
The resurgence of MLP models in image recognition is an attempt to reduce the computational overhead of transformers~\citep{vaswani2017Transformer}. MLP-based models rely on linear layers, instead of convolutional layers or the self-attention block. The MLP-Mixer~\citep{tolstikhin2021mlp} is among the first networks that demonstrate a high accuracy on challenging benchmarks. The MLP-Mixer uses tokens (i.e., vectorized patches of the image) and captures both inter- and intra-token correlations. Follow-up works improve upon the simple idea of token-mixing MLP structure~\citep{touvron2022resmlp, liu2021pay}.  Concretely, ViP \citep{hou2022ViP}, cycleMLP \citep{chen2021cyclemlp}, S2-MLPv2 \citep{yu2021s2MLPV2} design strategies to improve feature aggregation across spatial positions. Our work differs from MLP-based models, as it is inspired by the idea of capturing high-order interactions using polynomial expansions.

\textbf{Polynomial models}: \rebuttal{Polynomial expansions establish connections between input variables and learnable coefficients through addition and multiplication. Polynomial Nets (\shortpolyname) express the output variable (e.g., class) as a high-degree expansion of the input elements (e.g., pixels of an image) using learnable parameters. Even though extracting such polynomial features is not a new idea~\citep{shin1991pi-sigma, li2003spsnn}, it has been largely sidelined from architecture design.}\looseness-1

\rebuttal{The last few years \shortpolyname{} have demonstrated promising performance in standard benchmarks in various vision tasks including image recognition~\citep{chrysos2022augmenting}. In particular, \shortpolyname{} augmented with activation functions, which are referred to as \emph{hybrid models} in this work, can achieve state-of-the-art performance~\citep{hu2018squeeze,li2019selective,yin2020disentangled, babiloni2021poly, yang2022polynomialnerf,georgopoulos2020multilinear,chrysos2021conditional,georgopoulos2021mitigating}. The work of \citet{chrysos2022augmenting} introduces a taxonomy for a single block of \shortpolyname{} based on the degree of interactions captured. This taxonomy allows the comparison of various approaches based on a specific degree of interactions. Building upon this work, researchers have explored ways to modify the degree or type of interactions captured to improve the performance. For example, \citet{babiloni2021poly} reduce computational cost of the popular non-local block~\citep{wang2018non} by capturing exactly the same third degree interactions, while \citet{chrysos2022augmenting} investigate modifications to the degree of expansion.}

\rebuttal{Arguably, the works most closely related to ours are \citet{chrysos2020pinet, chrysos2023regularization}. In \modelnamePI~\citep{chrysos2020pinet}, three different models are instantiated based on separate manifestations of three tensor decompositions. That is, the paper establishes a link between a concrete tensor decomposition and its assumptions to a concrete architecture that can be implemented in standard frameworks. The authors evaluate those different architectures and notice that they perform well, but do not surpass standard networks, such as ResNet. \citet{chrysos2023regularization} improve upon the \modelnamePI{} by introducing carefully designed (implicit and explicit) regularization techniques, which reduce the overfitting of high-degree polynomial expansions.} 

\rebuttal{Contrary to the aforementioned \shortpolyname, we are motivated to introduce a new architecture using \shortpolynamesingle{} that is comparable to modern networks. To achieve that, we are inspired by the modern setup of considering the input as a sequence of tokens. The token-based input is widely used across a range of domains and modalities that last few years. The token-based input also departs from the design of previous \shortpolyname{} that utilize convolutional layers instead of simple matrix multiplications. Arguably, our choice results in a weaker inductive bias as pointed out in the related works of MLP-based models. This can be particularly useful in domains outside of images, e.g., in the ODE experiments. }

\section{Method}
\label{sec:poly_mixer_method}
Let us now introduce \modelnamePM, which expresses the output as a polynomial expansion of the input using multilinear operations. Firstly, we introduce the core layer, called \layer, in \cref{subsec:Polymlp}. 
Then, in \cref{subsec:Network Architecture}, we design the whole architecture with a schematic visualized in \cref{fig:Poly Block}.

\textbf{Tokens}: Following the conventions of recent MLP-based models, we consider an image as a sequence of tokens. In practice, each token is a patch of the image. We denote the sequence of tokens as $\bm{X} \in \realnum^{d \times s}$, where $d \in \naturalnum$ is the length of a token and $s\in \naturalnum$ is the number of tokens. As such, our method below assumes an input sequence $\bm{X}$ is provided. MLP-based models capture linear interactions between the elements of a token. That operation would be denoted as $\bm{\Gamma}\bm{X}$, where $\bm{\Gamma} \in \realnum^{o\times d}$ is a learnable matrix.

\begin{figure}[tbp]
\captionsetup{font=small}
  \centering
 \includegraphics[width=0.8\columnwidth]{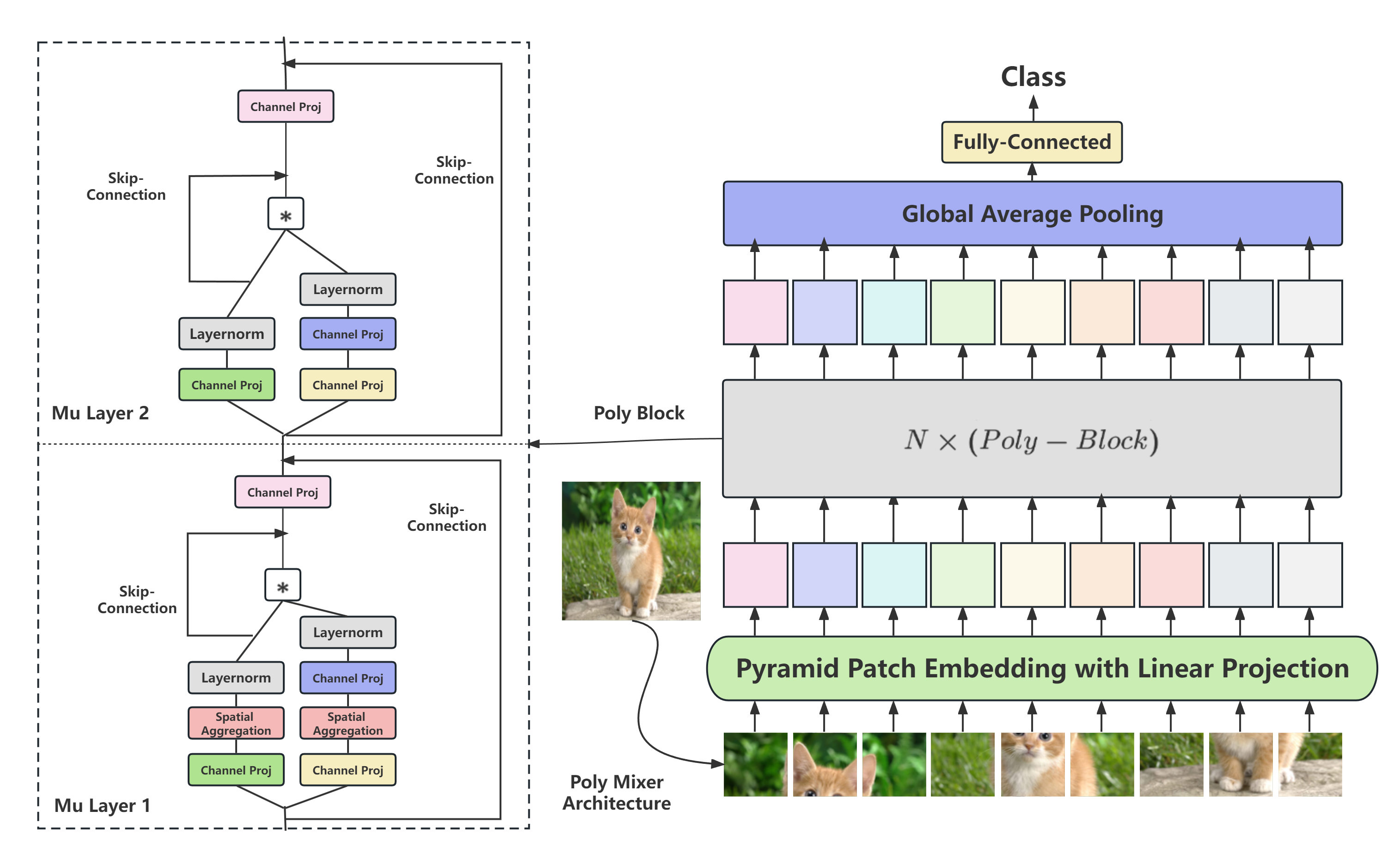}
  
  \caption{The architecture of the proposed \layer{} (on the left) and \modelnamePM{} (on the right). In the left figure, the grey box represents layer normalization.  The color solid line boxes represent channel projection in different dimensions, all projection operations are linear. The $\ast$  box denotes an elementwise (Hadamard) product. The red dash box represents the spatial aggregation module.}
  \label{fig:Poly Block}
  \vspace{-7mm}
\end{figure}

\subsection{\layer}
\label{subsec:Polymlp}
Our goal is to capture rich interactions among the input elements of a token. We can achieve that using multiplicative interactions. Specifically, we use two branches, and each captures a set of linear interactions inside the token. An elementwise product is then used to capture the multiplicative interactions between the branches. Notation-wise, the output is expressed as $\left(\bm{A}\bm{X}\right) \ast \left(\bm{\Lambda}\bm{X} \right)$, where $\ast$ denotes the elementwise product and $\bm{A}, \bm{\Lambda}$ are learnable matrices. 

We perform the following three modifications to the aforementioned block. Firstly, we add a shortcut connection to capture the first-degree interactions. Secondly, we decompose $\bm{\Lambda}$ into two matrices as $\bm{\Lambda} = \bm{B}\bm{D}$. This rank factorization of $\bm{\Lambda}$ enables us to capture low-rank representations in this branch. Thirdly, we add one matrix $\bm{C}$ to capture the linear interactions of the whole expression. Then, the complete layer is expressed as follows:
\begin{equation}
    \bm{Y} = \bm{C} \left[\left(\bm{A} \bm{X}\right) \ast \left(\bm{B} \bm{D}\bm{X} \right)+ \bm{A} \bm{X} \right]\;,
    \label{eq:poly_mixer_core_equation_mu_layer}
\end{equation}
where $\bm{A}\in\realnum^{m\times d}, \bm{B}\in\realnum^{m\times l}, \bm{C}\in\realnum^{o\times m}, \bm{D}\in\realnum^{l\times d}$ are learnable parameters and $*$ symbolizes an elementwise product. The block has two hyperparameters we can choose, i.e., the rank $m\in\naturalnum$ and the value $l\in\naturalnum$. In practice, we utilize a shrinkage ratio $\frac{m}{l}> 1$ to encourage different information flow in the branches. \cref{proposition:poly_mixer_multiplicative_interaction_per_layer}, the proof of which is in \cref{sec:poly_mixer_app_proof_proposition}, indicates the interactions captured.  
The schematic of this layer, called \layer, is depicted in \cref{appendix:PolyMLP}. 

\begin{proposition}
\label{proposition:poly_mixer_multiplicative_interaction_per_layer}
The \layer{} captures multiplicative interactions between elements of each token.
\end{proposition}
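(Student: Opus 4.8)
The plan is to reduce the statement to a single token and then expand the defining expression entrywise, reading off the second-degree monomials. Since the Hadamard product $\ast$ acts columnwise, the map in \cref{eq:poly_mixer_core_equation_mu_layer} processes each token independently; I would therefore fix an arbitrary column $\bm{x}\in\realnum^{d}$ of $\bm{X}$ (one token) and analyze $\bm{y} = \bm{C}\left[(\bm{A}\bm{x})\ast(\bm{B}\bm{D}\bm{x}) + \bm{A}\bm{x}\right]$, the claim for the full $\bm{X}$ following by applying the same argument to each column.

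First I would write each linear branch entrywise, $(\bm{A}\bm{x})_i = \sum_{j} A_{ij} x_j$ and $(\bm{B}\bm{D}\bm{x})_i = \sum_{j'} (\bm{B}\bm{D})_{ij'} x_{j'}$, and take their Hadamard product in the index $i$, which gives
\[
\left[(\bm{A}\bm{x})\ast(\bm{B}\bm{D}\bm{x})\right]_i = \sum_{j=1}^{d}\sum_{j'=1}^{d} A_{ij}(\bm{B}\bm{D})_{ij'}\, x_j x_{j'}.
\]
Thus every entry is a homogeneous degree-two polynomial in the token elements whose monomials are precisely the pairwise products $x_j x_{j'}$. Adding the shortcut $\bm{A}\bm{x}$ contributes only first-degree terms, and applying $\bm{C}$ is a linear recombination across the $m$ intermediate coordinates, so each output entry $y_p = \sum_i C_{pi}\big(\sum_{j,j'} A_{ij}(\bm{B}\bm{D})_{ij'} x_j x_{j'} + \sum_j A_{ij} x_j\big)$ is a degree-two polynomial in $x_1,\dots,x_d$ that genuinely contains the cross terms $x_j x_{j'}$ with $j\neq j'$, i.e.\ the multiplicative interactions asserted.

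To conclude that these interactions are actually realized rather than forced to vanish, I would finally exhibit, for any target pair $(j,j')$, a choice of $\bm{A},\bm{B},\bm{C},\bm{D}$ making the coefficient of $x_j x_{j'}$ nonzero (for instance a rank-one construction of the relevant rows), showing the coefficient is not identically zero. The main obstacle I anticipate is the bookkeeping introduced by the low-rank factorization $\bm{\Lambda}=\bm{B}\bm{D}$: since $\bm{B}\bm{D}$ has rank at most $l$, the bilinear form governing the interactions is rank-constrained, so some care is needed to confirm that the shrinkage ratio $m/l>1$ still leaves every individual pairwise interaction expressible rather than merely a low-rank subset of them jointly. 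Once the entrywise expansion above is in hand, however, this last point reduces to a short linear-algebra verification.
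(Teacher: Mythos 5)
Your proposal is correct and takes essentially the same route as the paper's proof: expand the Hadamard product entrywise (column by column, i.e.\ per token), observe that each entry is a sum of products $x_j x_{j'}$ of token elements, and note that the shortcut and the matrix $\bm{C}$ only add first-degree terms and a linear recombination. Your extra final step---verifying the quadratic coefficients are not identically zero---is a point of rigor the paper omits; it is easily discharged, since for any fixed pair $(j,j')$ even a rank-one $\bm{B}\bm{D}$ yields a nonzero coefficient, so the shrinkage constraint $m/l>1$ poses no obstacle.
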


\textbf{\block}: The \block{}, which is the core operation of \modelnamePM{} architecture, stacks \layer s sequentially. Concretely, we connect two \layer s sequentially and add a a layer normalization~\citep{ba2016layernorm} between them. Additionally, we add a shortcut connection to skip the first block. The two \layer s are similar except for two differences: (a) a spatial shift operation \citep{yu2022s2} is added to the first block, (b) the hidden dimension is $m$ in the first \layer, and $m\cdot \epsilon$ in the second, where $\epsilon$ is an expansion factor. The \block{} is illustrated in \cref{fig:Poly Block}.

\subsection{Network Architecture}
\label{subsec:Network Architecture}

Our final architecture is composed of $N$ \block s. 
Each block captures up to $4\myth$ degree interactions, which results in the final architecture capturing up to $4^N$ interactions, with $N>10$ in practice\footnote{\rebuttal{A theoretical proof on the degree of expansion required to tackle a specific task is an interesting problem that is left as future work.}}. Beyond the \block, one critical detail in our final architecture is the patch embedding.

\textbf{Patch embeddings}: In deep learning architectures, the choice of patch size plays a crucial role in capturing fine-grained features. Smaller patches prove more effective in this regard. However, a trade-off arises, as reducing the patch size leads to an increase in floating-point operations, which scales approximately quadratically. To overcome this challenge, we introduce a novel scheme called pyramid patch embedding. This scheme operates by considering embeddings at smaller scales and subsequently extracting new patch embeddings on top. By adopting this approach, we can leverage the finer details of features without introducing additional parameters or computational complexity. Our experiments validate the effectiveness of incorporating multi-level patches, enhancing the overall performance. For the \imagenet{} dataset, we employ a two-level pyramid scheme, although it is worth noting that higher-resolution images may benefit from additional levels. We present the schematic and further details of this approach in \cref{appendix:pyramid embedding}.

\section{Experiments}
\label{sec:poly_mixer_experiments}
\begin{table}[tbp]
\centering
\caption{Configurations of different \modelnamePM{} models. We present four models: two variants with different parameter-size (Tiny, Small) and two variants with different hidden-size. The original model use the same hidden-size across the architecture. The Multi-stage model adopts a different hidden sizes across the architecture. The number in stages represents the number of blocks that each stage includes, while the hidden size list numbers correspond to each stage's hidden size. We provide further details along with a schematic in \cref{appendix:architecture}. We only use Multi-stage models for high-resolution image classification benchmarks, e.g., on \imagenet.}
\resizebox{\textwidth}{!}{
\begin{tabular}{ccccc}
\toprule
Specification     & Tiny                   & Small                   &Multi-stage-Tiny   &Multi-stage-Small \\ \midrule
Numbers of Blocks & 32                      & 45                     &32                &32\\
Embedding Method  & Pyramid Patch Embedding & Pyramid Patch Embedding & Pyramid Patch Embedding & Pyramid Patch Embedding \\
Hidden size       & 192                     & 320                     &{64,128,192,192} &{128,192,256,384} \\
Stages            & -                       & -                        &{4,8,12,10}    &{4,6,12,14} \\
Expansion Ratio   & 3                       & 3                       &3               & 3 \\
Shrinkage Ratio   & 4                       & 4                       & 8              & 8  \\
Parameters        & 14.0M                  & 56.8M                    & 10.3M           &  32.9M  \\ 
FLOPs             & 3.6                   & 11.2                      & 2.8             & 6.8                       \\ \bottomrule
\label{tab:specification}
\end{tabular}}
\end{table}

In this section, we conduct a thorough experimental validation of \modelnamePM. We conduct experiments on large-scale image classification in \cref{ssec:poly_mixer_experiment_imagenet}, fine-grained and small-scale image classification in \cref{ssec:poly_mixer_experiment_fine_grained}. In addition, we exhibit a unique advantage of models without activation functions to learn dynamic systems in scientific computing in \cref{ssec:neuralode experiment}. Lastly, we validate the robustness of our model to diverse perturbations in \cref{ssec:robust experiment}. 
We summarize four configurations of the proposed \modelnamePM{} in \cref{tab:specification} with different versions of \modelnamePM.  We present a schematic for our configuration in \cref{appendix:architecture}. Due to the limited space, we conduct additional experiments in the Appendix. Concretely, we experiment on semantic segmentation in \cref{appendix:semantic segmentation}, while we add additional ablations and experimental details on \cref{appendix:imagenet,appendix:add med,appendix:initial}.

\subsection{\imagenet{} Classification}
\label{ssec:poly_mixer_experiment_imagenet}
\imagenet, which is the standard benchmark for image classification, contains 1.2M images with 1,000 categories annotated. We consider a host of baseline models to compare the performance of \modelnamePM{}. Concretely, we include strong-performing polynomial models\footnote{\rebuttal{We note that we report results of $\Pi-$nets without activation functions~\citep{chrysos2020pinet} for a fair comparison. The models with activation functions are reported as `hubrid'.}}~\citep{chrysos2020pinet, chrysos2023regularization}, MLP-like models~\citep{tolstikhin2021mlp,touvron2022resmlp, yu2022s2}, models based on vanilla Transformer~\citep{vaswani2017Transformer,touvron2021DeiT} and several classic convolutional networks~\citep{bello2019attention,chen2018a2nets,simonyan2014vggnet,he2016resnet}.

\textbf{Training Setup}: We train our model using AdamW optimizer~\citep{loshchilov2017adamw}. We use a batch size of $448$ \rebuttal{per GPU} to fully leverage the memory capacity of the GPU. We use a linear warmup and cosine decay schedule learning rate, while the initial learning rate is 1e-4, linear increase to 1e-3 in 10 epochs and then gradually drops to 1e-5 in 300 epochs. We use label smoothing~\citep{szegedy2016labelsmooth}, standard data augmentation strategies, such as Cut-Mix~\citep{yun2019cutmix}, Mix-up~\citep{zhang2017mixup} and auto-augment~\citep{cubuk2018autoaugment}, which are used in similar methods~\citep{tolstikhin2021mlp,trockman2022patches,touvron2022resmlp}. Our data augmentation recipe follows the one used in MLP-Mixer~\citep{tolstikhin2021mlp}. We do not use any external data for training. We train our model using native PyTorch training on 4 NVIDIA A100 GPUs. 
\begin{table}[t]
  \caption{\imagenet{} classification accuracy (\%) on the validation set for different models. In the Activation Function column, G denotes Gelu, R denotes Relu, T denotes Tanh, and P denotes RPReLU. The models with \textcolor{blue}{$\dagger$} are special polynomial models with activation functions. \rebuttal{We mark model with up to 15M params in red color, models with 15-40M parameters with green color and model with over 40M parameters with blue.} %
  }
  \label{tab:imagenet-benchmark-updated}
  \centering
  \setlength{\tabcolsep}{0.6\tabcolsep}
  \resizebox{\textwidth}{!}{%
  \begin{tabular}{@{}lccccccc@{}}
        \toprule
    \multirow{2}{*}{} &  & \multicolumn{2}{c}{\textbf{Accuracy}} \\  \cmidrule(lr){3-4}
     & \textbf{Extra Data} & \textbf{Top-1(\%)} & \textbf{Top-5(\%)} &\textbf{FLOPs (B)} & \textbf{Params (M)}& {\textbf{Activation}}&{\textbf{Attention}} \\ \midrule
    \multicolumn{5}{c}{\hspace{7cm} \textbf{CNN-based}} \\ \midrule
    \rowcolor{small}
    ResNet-18~\citep{he2016resnet} &\ding{53} & 69.7 &89.0 & 1.8 & 11.0& R&\ding{53}\\
    \rowcolor{base}
    ResNet-50~\citep{he2016resnet}& \ding{53} & 77.2 & 92.9 &4.1 & 25.0 &R&\ding{53}\\
    \rowcolor{base}
    $A^2$Net~\citep{chen2018a2nets}& \ding{53} & 77.0 & 93.5 &31.3 & 33.4 &R&\ding{51}\\
    \rowcolor{big}
    AA-ResNet-152~\citep{bello2019attention}& \ding{53} & 79.1 & 94.6 &23.8 & 61.6 &R&\ding{51}\\
    \rowcolor{big}
    VGG-16~\citep{simonyan2014vggnet} & \ding{53}  & 71.5 & 92.7 &15.5 & 138.0 &R&\ding{53}\\
    \midrule
    \multicolumn{5}{c}{\hspace{7cm}\textbf{Transformer-based}} \\ \midrule
    \rowcolor{base}
     DeiT-S/16~\citep{touvron2021DeiT} & \ding{51} & 81.2 & - & 5.0 & 24.0 & G&\ding{51}\\
     \rowcolor{big}
    ViT-B/16~\citep{vaswani2017Transformer} & \ding{53} & 77.9 & - &55.5 & 86.0& G&\ding{51}\\
    \midrule
    \multicolumn{5}{c}{\hspace{7cm}\textbf{MLP-based}} \\ \midrule

    \rowcolor{small}
    BiMLP-S~\citep{xu2022bimlp} & \ding{53} & 70.0 &- & 1.21 & - &P&\ding{51}\\
    \rowcolor{small}
    BiMLP-B~\citep{xu2022bimlp} & \ding{53} & 72.7 &- & 1.21 & - &P&\ding{51}\\
    \rowcolor{small}
    ResMLP-12~\citep{touvron2022resmlp} & \ding{51}  & 76.6 & - &3.0 & 15.0& G&\ding{53}\\
    \rowcolor{base}
    Hire-MLP-Tiny~\citep{guo2022hire} & \ding{53} &79.7 &- & 2.1 & 18.0 &G&\ding{51}\\
     \rowcolor{base}
    CycleMLP-T~\citep{chen2021cyclemlp}& \ding{53} &81.3 &- & 4.4 & 28.8 &G&\ding{51}\\
    \rowcolor{base}
    ResMLP-24~\citep{touvron2022resmlp} & \ding{51}  & 79.4 &- & 6 & 30.0& G&\ding{53}\\
    \rowcolor{big}
    MLP-Mixer-B/16~\citep{tolstikhin2021mlp} & \ding{53} & 76.4 &- & 11.6 & 59.0& G&\ding{53}\\
    \rowcolor{big}
    MLP-Mixer-L/16~\citep{tolstikhin2021mlp} & \ding{53} & 71.8 &- & 44.6 & 207.0& G&\ding{53}\\
    \rowcolor{big}
    $S^2$MLP-Wide~\citep{yu2022s2} & \ding{53} &80.0 &94.8 & 14.0 & 71.0 &G&\ding{53}\\
    \rowcolor{big}
    $S^2$MLP-Deep~\citep{yu2022s2} & \ding{53} &80.7 &95.4 & 10.5 & 51.0 &G&\ding{53}\\
    \rowcolor{big}
    FF~\citep{melas2021FF} & \ding{53} & 74.9 & - &7.21 & 59.0 &G&\ding{53}\\ 
    \midrule
    \multicolumn{5}{c}{\hspace{7cm}\textbf{Polynomial-based}} \\ \midrule
    \rowcolor{small}
    $\Pi$-Nets~\citep{chrysos2020pinet}  & \ding{53} &65.2 &85.9 &1.9&12.3&None& \ding{53}\\ 
    \rowcolor{small}
    Hybrid $\Pi$-Nets~\citep{chrysos2020pinet}\textcolor{blue}{$\dagger$}  & \ding{53} & 70.7 &89.5 &1.9&11.9&R,T &\ding{53}\\ 
    \rowcolor{small}
    PDC-comp~\citep{chrysos2022augmenting}\textcolor{blue}{$\dagger$}&\ding{53} &69.8 & 89.9 &1.3&7.5&R,T&\ding{53}\\
    \rowcolor{small}
    PDC~\citep{chrysos2022augmenting}\textcolor{blue}{$\dagger$}&\ding{53} &71.0 & 89.9 &1.6&10.7&R,T&\ding{53}\\
    \rowcolor{small}
   \newmodelnamePI ~\citep{chrysos2023regularization} & \ding{53} & 70.2 &89.3 &1.9&12.3&None& \ding{53}\\
    \rowcolor{small}
   \modelnamedense~\citep{chrysos2023regularization}  & \ding{53} & 70.0 &89.4 &1.9&11.3&None&\ding{53} \\ 
    \rowcolor{small}
     \textbf{Multi-stage \modelnamePM-T (Ours) }& \ding{53} & \textbf{77.0} &\textbf{93.4} &2.8&10.3&None&\ding{53} \\ 
     \rowcolor{base}
     \textbf{Multi-stage \modelnamePM-S (Ours)}  & \ding{53} & \textbf{81.3} &\textbf{95.5} &6.8&32.9&None& \ding{53}\\ 
    \bottomrule
  \end{tabular}
  }
\end{table}

We exhibit the performance of the compared networks in \cref{tab:imagenet-benchmark-updated}. Notice that our smaller model achieves a \textbf{10\%} improvement over previous \shortpolyname. Our larger model further strengthens this performance gain and manages for the first time to close the gap between \shortpolyname{} and be on par with other recent models.

\subsection{Additional benchmarks in image recognition}
\label{ssec:poly_mixer_experiment_fine_grained}

\begin{table}[tb]
\begin{center}
\caption{Experimental validation of different polynomial architectures on smaller datasets. The best are marked in \textbf{bold}. 
     \modelnamedense{} perform favorably to most of the other baseline polynomial networks and the rest baselines. \modelnamePM{} outperforms all the baselines and all polynomial networks in two of the four datasets, and shares the best accuracy with \newmodelnamePI{} in another dataset.  
      }
      \label{tab:polymixer_experiment_recognition}
\resizebox{0.7\textwidth}{!}{%
\begin{tabular}{c|cccc}
\toprule
Model          & CIFAR10       & SVHN          & Oxford Flower & Tiny Imagenet \\ \hline
Resnet18       & 94.4          & 97.3          & 87.8          & 61.5          \\ \hline
MLP-Mixer      & 90.6          & 96.8          & 80.2          & 45.6          \\
Res-MLP        & 92.3          & 97.1          & 83.2          & 58.9          \\
$S^{2}$MLP-Deep-S   & 92.6          & 96.7          & 93.0          & 59.3          \\
$S^{2}$MLP-Wide-S   & 92.0          & 96.6          & 91.5          & 52.7          \\ \hline
Hybrid Pi-Nets & 94.4          & -             & 88.9          & 61.1          \\
$\Pi$-Nets     & 90.7          & 96.1          & 82.6          & 50.2          \\
PDC            & 90.9          & 96.0          & 88.5          & 45.2          \\
\newmodelnamePI     & 94.5          & 97.6          & 94.9          & 61.5          \\
\modelnamedense     & 94.7          & 97.5          & 94.1          & \textbf{61.8} \\
\modelnamePMS{}    & \textbf{94.8} & \textbf{97.6} & \textbf{95.0} & 61.5          \\ \hline
\end{tabular}}
\end{center}
\end{table}

Beyond \imagenet, we experiment with a number of additional benchmarks to further assess \modelnamePM. We use the standard datasets of CIFAR10 \citep{krizhevsky2009cifar10}, SVHN \citep{netzer2011svhn} and Tiny \imagenet{} \citep{le2015tinyimagenet} for image recognition. A fine-grained classification experiment on Oxford Flower102 \citep{oxfordflower} is conducted. Beyond the different distributions, those datasets offer insights into the performance of the proposed model in datasets with limited data\footnote{Previous MLP-based models have demonstrated weaker performance than CNN-based models in such datasets.  The original $S^{2}$MLP only reports two large models on these datasets. We use an open source $S^{2}$MLP code design two models with 12-14 M parameters for a fair comparison. Those models, noted as $S^{2}$MLP-Wide-S (hidden size 512,depth 8) and $S^{2}$MLP-Deep-S (hidden size 384,depth 12) are used for the comparisons.}.
The results in \cref{tab:polymixer_experiment_recognition} compare the proposed method with other strong-performing methods. \modelnamePM{} outperforms the compared methods with the second strongest being the \newmodelnamePI. MLP-models usually do not perform well in small datasets as indicated in \citet{tolstikhin2021mlp} due to their weaker inductive bias. Notably, \modelnamePM{}  still performs better than other CNN-based polynomial models.

\subsection{Poly Neural ODE Solver}
\label{ssec:neuralode experiment}
\rebuttal{An additional advantage of our method is the ability to model functions that have a polynomial form (e.g, in ordinary differential equations) in an interpretable manner}~\citep{fronk2023interpretable}. A neural ODE solver is a computational technique that models dynamic systems relying on ODEs using neural networks. \citet{chen2018neural,dupont2019augmented} illustrate the potential of neural ODEs to capture the behavior of complex systems that evolve over time. However, an evident limitation is that those models remain black-box which means the learned equation remains unknown after training. On the contrary, the proposed model can recover the equations behind the dynamic system and explicitly restore the symbolic representation.

We provide an experiment of a polynomial neural ODE approximating the Lotka-Volterra ODE model. This model captures the dynamics of predator-prey population within a biological system. The equations representing the Lotka-Volterra formula are in the form of $\frac{d x} {d t} = \alpha x - \beta x y$, $ \frac{d y}{d t} = -\delta y + \gamma x y\ $. Given $\alpha = 1.56$, $\beta =1.12$, $\delta  = 3.10$, $\gamma = 1.21$, the ground truth equation are presented below:
\begin{equation}
    \displaystyle\frac{d x} {d t} = 1.56 x - 1.12 x y\;,\;\;\;\;\;\;\;\;\;\;\;\;\;\;\;\;\;\;\;\;\;\;\;\;\;\;\;\;\;\;\; \displaystyle \frac{d y}{d t} = -3.10 y + 1.21 x y\;.
\end{equation}

We generate $N=100$ discrete points between time 0 and 10 for training. The training loss with epochs and predicted trajectory are shown in \cref{fig:lkequation}.
\begin{figure}[tbp]
\captionsetup{font=small}
  \centering
 \includegraphics[width=0.7\columnwidth]{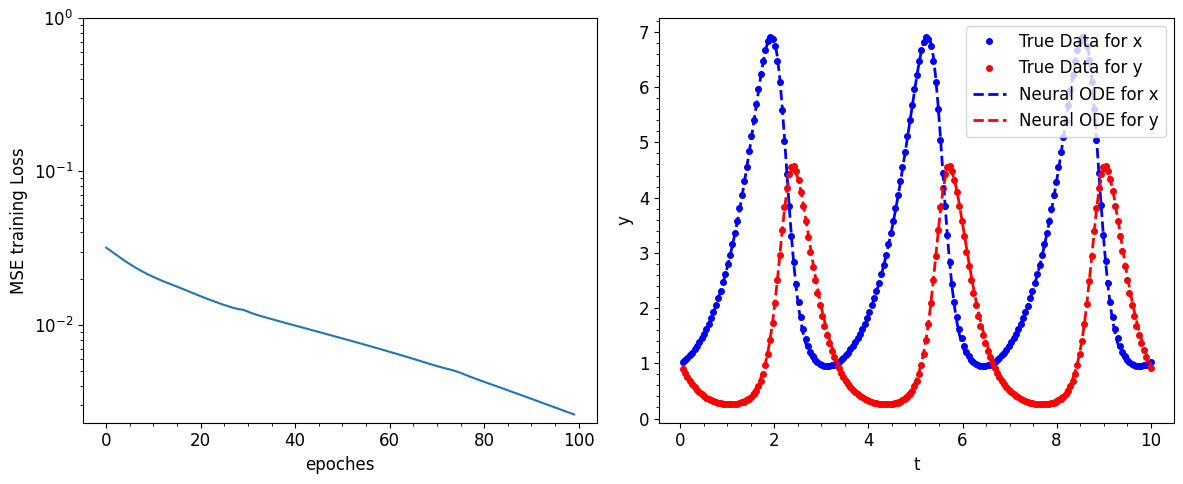}
   \caption{The training loss change with epochs trained(\textbf{Left}). The ground truth and model predicted trajectory. (\textbf{Right}) Our model achieves low loss in 20 epochs and successfully predicts real trajectory.}
\label{fig:lkequation}
 \end{figure}

Notice that our model can directly recover the right hand side of the Lotka-Volterra formula. Indeed, the learned model in our case recovers the following formula:
\begin{equation}
    \displaystyle\frac{d x} {d t} = 1.56 x - 1.12001 x y\;,\;\;\;\;\;\;\;\;\;\;\;\;\;\;\;\;\;\;\;\;\;\;\;\;\;\;\;\;\;\;\; \displaystyle \frac{d y}{d t} = -3.10 y + 1.21001 x y\;.
\end{equation}
Examining the results, it is evident that our Poly Neural ODE achieves high accuracy in recovering the ground truth formula and demonstrates rapid convergence. A more comprehensive comparison with NeuralODE and Augmented NeuralODE is conducted in \cref{appendix:NeuralODE Solver comparsion}.

\subsection{Robustness}
\label{ssec:robust experiment}

\begin{table}[tb]
\setlength{\tabcolsep}{0.225\tabcolsep}
\caption{Robustness on ImageNet-C \citep{hendrycks2018benchmarking}. The corruption error (and mean corruption error (mCE)) is used as the robustness metric, where a lower score indicates a better performance. The best performance per category is highlighted in bold. Notice that the proposed method is robust to a range of corruptions. In fact, in many categories, such as `Weather' and `Digital', the proposed model outperforms the compared models in all designated corruptions. 
}
\resizebox{\textwidth}{!}{%
\begin{tabular}{lc|ccc|cccc|cccc|cccc}
\hline
                  & \multicolumn{1}{l|}{} & \multicolumn{3}{c|}{Noise}                    & \multicolumn{4}{c|}{Blur}                                   & \multicolumn{4}{c|}{Weather}                     & \multicolumn{4}{c}{Digital}                                              \\
Network           & mCE(↓)                & Gauss         & Shot          & Impulse       & Defocus       & Glass         & Motion      & Zoom          & Snow & Frost       & Fog         & Bright        & Contrast      & Elastic       & Pixel         & \multicolumn{1}{c}{JPEG} \\ \hline
ResNet50          & 76.7                  & 79.8          & 81.6          & 82.6          & 74.7          & 88.6          & 78          & 79.9          & 77.8 & 74.8        & 66.1        & 56.6          & 71.4          & 84.7          & 76.9          & \multicolumn{1}{c}{76.8} \\ \hline
DeiT              & 54.6                  & 46.3          & 47.7          & 46.4          & 61.6          & \textbf{71.9} & 57.9        & 71.9          & 49.9 & 46.2        & 46          & 44.9          & 42.3          & 66.6          & 59.1          & 60.4                      \\
Swin              & 62.0                    & 52.2          & 53.7          & 53.6          & 67.9          & 78.6          & 64.1        & 75.3          & 55.8 & 52.8        & 51.3        & 48.1          & 45.1          & 75.7          & 76.3          & 79.1                      \\ \hline
MLP-Mixer         & 78.8                  & 80.9          & 82.6          & 84.2          & 86.9          & 92.1          & 79.1        & 93.6          & 78.3 & 67.4        & 64.6        & 59.5          & 57.1          & 90.5          & 72.7          & 92.2                      \\
ResMLP            & 66.0                    & 57.6          & 58.2          & 57.8          & 72.6          & 83.2          & 67.9        & 76.5          & 61.4 & 57.8        & 63.8        & 53.9          & 52.1          & 78.3          & 72.9          & 75.3                      \\
gMLP              & 64                    & 52.1          & 53.2          & 52.5          & 73.1          & 77.6          & 64.6        & 79.9          & 77.7 & 78.8        & 54.3        & 55.3          & 43.6          & 70.6          & 58.6          & 67.5                      \\
CycleMLP          & 53.7                  & \textbf{42.1} & \textbf{43.4} & \textbf{43.2} & 61.5          & 76.7          & 56.0          & 66.4          & 51.5 & 47.2        & 50.8        & 41.2          & 39.5          & 72.3          & 57.5          & 56.1                      \\ 
HireMLP  &51.9&52.4&55.3&55.3&60.3&\textbf{71.6}&59.6&57.8&54.3&52.5&43.5&29.1&42.2&54.9&49.7&40.2
\\ \hline
\newmodelnamePI &73.8& 84.0& 84.4&88.0&81.9&83.6&75.0&77.9&71.8&72.2&69.4&45.7&67.4&65.8&74.8&65.4\\
\modelnamePMB{} & \textbf{49.7}         & 51.3          & 52.3          & 53.0            & \textbf{57.8} & 72.1          & \textbf{55.0} & \textbf{60.6} & \textbf{50.4} & \textbf{46.0} & \textbf{42.0} & \textbf{27.6} & \textbf{38.1} & \textbf{53.7} & \textbf{47.7} & \textbf{39.0} \\     
\bottomrule
\end{tabular}}
\label{tab:robustness}
\end{table}
We further conduct experiments on ImageNet-C \citep{hendrycks2018benchmarking} to analyze the robustness of our model. ImageNet-C contains 75 types of corruptions, which makes it a good testbed for our evaluation. We follow the setting of CycleMLP \citep{chen2021cyclemlp} and compare \modelnamePM{} with existing models using the corruption error as the metric (lower value indicates a better performance). \cref{tab:robustness} illustrates that our models exhibits the strongest ability among recent models. Notice that in important categories, such as the `weather' or the `digital' category, \modelnamePM{} outperforms the compared methods in all types of corruption. Those categories can indeed be corruptions met in realistic scenaria, making us more confident on the robustness of the proposed model in such cases.

\subsection{Ablation Study}
\label{subsec:Ablation Analysis}

We conduct an ablation study using \abdataset, a subset of \imagenet{} with $100$ classes selected from the original ones. Previous studies have reported that \abdataset{} serves as a representative subset of \imagenet~\citep{yan2021der,yu2021s2MLPV2,douillard2021dytox}. Therefore, \abdataset{} enables us to make efficient use of computational resources while still obtaining representative results for the self-evaluation of the model. 
In every experiment below, we assess one module or hyperparameter, while we assume that the rest remain the same as in the final model.

\textbf{Module ablation}: In this study, we investigate the influence of various modules on the final network performance by removing and reassembling modules within the block. Please note that the following blocks do not contain any activation functions. In \cref{tab:layer number ablation}, we report the result using a hidden size of $384$ and depth $16$ on the \abdataset. We observe that spatial shift contributes to the performance improvement in our model but it is not crucial. The network still achieves high accuracy even in the absence of this module. At the same time, the removal of the \layer{} from the model leads to a significant drop in performance. %
The results validate that the spatial shift module alone cannot replace the proposed multilinear operations. 
\begin{table}[tb]
\centering
\small
\caption{The influence of the different module in Block. Based on depth $16$, hidden size $384$ model on ImageNet100.
We incorporate spatial shift since this operation is fully linear and brings a  performance gain. However, the primary performance improvement comes from the \layer{} we proposed as shown in the following table. Notice all models mentioned below are without activation functions.}
\resizebox{0.8\textwidth}{!}{%
\begin{tabular}{@{}ccccc@{}}
\toprule
\textbf{Block Type} & \textbf{Layer 1 } & \textbf{Layer 2 } & \textbf{Paras. (M)} & \textbf{Top-1 Acc(\%)} \\
\hline
\block & \layer +Spatial-Shift & \layer  & 1.70 & $82.94\%$\\
\block* & \layer  & \layer  & 1.70 & $81.50\%$\\
Mix Block & \layer +Spatial-Shift & MLP Layer & 1.26 & $67.61\%$ \\
Linear Block & MLP Layer+Spatial-Shift & MLP Layer & 1.21 & $55.11\%$ \\
\hline
\end{tabular}
}
\label{tab:layer number ablation}
\end{table}

\textbf{Patch Embedding}: \label{subsec:patch embed text}
In this study, we validate the effectiveness of our proposed pyramid patch embedding. We replace the input embedding layer of a model with depth $16$ and hidden size $384$ on the \abdataset{} dataset. As depicted in \cref{tab:patch embed ablation}, the model utilizing multi-level embedding achieves an Multi-stage performance with a Top-1 accuracy of $82.94\%$. The model utilizing a single-level embedding achieves a Top-1 accuracy of $81.78\%$, which is slightly lower than the multi-level embedding approach. For normal patch embedding, the computational and parameter overhead increases quadratically with a decrease in patch size. The result highlights that our approach significantly improves performance while adding only a small number of parameters and computational overhead. We conduct the same study for MLP-Mixer in \cref{appendix:pyramid embedding} with similar outcomes, indicating the efficacy of our core \block{} independently of the patch embedding module. 

 \begin{table}[tb]
\centering
\small
\caption{The influence of the patch embedding approach on a model with depth $16$, hidden size $384$ trained on \abdataset. Notice that the pyramid patch embedding can improve the performance with a minor difference in the parameters and maintain small FLOPs when using small patch.}
\begin{tabular}{@{}ccccc@{}}
\toprule
\textbf{Method} & \textbf{Patch size } & \textbf{Top-1 Acc(\%)} & \textbf{Paras (M)} & \textbf{FLOPs} \\
\hline
Patch Embedding & 7 & 83.08  & 27.77 & 28.14\\
Patch Embedding & 14 & 79.54  & 27.94 & 7.07\\
Pyramid Patch Embedding & 7 & 82.04 & 28.02 & 7.23\\
Pyramid Patch Embedding & 14,7 &82.94 & 28.54 & 7.28 \\
\hline
\end{tabular}
\label{tab:patch embed ablation}
\end{table}

\textbf{Hidden size}: We evaluate the impact of different hidden size $m$. We fix the depth of the network to $16$ and vary the hidden size. 
We observe a sharp decrease when MLP-Mixer uses a small hidden size under $128$. The results in \cref{tab:hidden size} validate that our proposed method is more robust in hidden size change compared to the normal MLP layer. 
\begin{table}[tb]
    \centering
    \caption{ The influence of the hidden size, $m$.The models result are based on depth $16$ model with different hidden size, the number in the second and their row indicates Top-1 Accuracy(\%) on \abdataset. The MLP-Mixer shows a sharp decline when hidden size change from $128$ to $96$. }
    \begin{tabular}{c c c c c c c}
    \hline
        $m$ & 96 & 128 & 192 & 256 & 384 & 512  \\ \hline
        MLP-Mixer & 48.3\% & 70.05\% & 72.39\% & 76.88\% & 78.24\% & 77.93\%  \\ \hline
        \modelnamePM{}& 67.5\% & 75.82\% & 79.3\% & 81.94\% & 82.94\% & 82.24\% \\ \hline
    \end{tabular}
\label{tab:hidden size}
\end{table}

\textbf{Depth}: The backbone of our proposed model, \modelnamePM{}, consists of $N$  blocks. We evaluate the influence of the number of blocks (depth $N$) on Top-1 accuracy and parameter numbers, using a model with a fixed hidden size and shrinkage ratio. 
We observe that for our network, the depth holds a more significance role than the hidden size. Our experiments result in \cref{tab:depth} validate our theoretical conjecture, as increasing depth yields a more pronounced improvement in performance.

\begin{table}[tb]
 \begin{minipage}{0.45\linewidth}
    \centering
    \caption{ The influence of the depth, $N$. The model is based on depth $16$, hidden size $384$ and is trained on \abdataset. }
    \begin{tabular}{@{}lcccc@{}}
    \toprule
    \textbf{$N$} & \textbf{Top-1(\%)} & \textbf{Top-5(\%)} & \textbf{Paras (M)} \\
    \midrule
    1 & 44.25 & 71.18 & 2.96 \\
    3 & 70.58 & 78.30 & 6.37 \\
    6 & 78.30 & 93.20 & 11.49 \\
    12 & 80.03 & 94.12 & 21.72 \\
    16 & 82.94 & 95.04 & 28.54 \\
    \bottomrule
    \end{tabular}
    \label{tab:depth}
\end{minipage}
\hspace{1cm}
\begin{minipage}{0.45\linewidth}
    \centering
    \caption{ The influence of the shrinkage ratio, $r$. The model is based on depth $16$, hidden size $m=384$,  trained on \abdataset. }
    \begin{tabular}{@{}lcccc@{}}
   \toprule
    \textbf{$r$} & \textbf{Top-1(\%)} & \textbf{Top-5(\%)} & \textbf{Paras (M)} \\
    \midrule
    1 & 83.18 & 94.86 & 52.98 \\
    2 & 83.88 & 95.54 & 36.46 \\
    4 & 82.94 & 95.04 & 28.54 \\
    8 & 82.28 & 95.12 & 24.06 \\
    16 & 82.34 & 94.62 & 21.99 \\
    \bottomrule
    \end{tabular}
    \label{tab:shrinkage}
    \end{minipage}
\end{table}

\textbf{Shrinkage ratio}:  As mentioned in \cref{subsec:Polymlp} the shrinkage ratio is defined as $r=\frac{m}{l}$. The results of different shrinkage ratios based on a hidden size of $m=384$ 
and a depth of $16$ in our model are presented in \cref{tab:shrinkage}. We observe that: i) A larger shrinkage ratio effectively reduces the number of parameters while having a relatively minor impact on performance; ii) a lower effective rank in the weights is sufficient for performance gain.

\section{Conclusion}

In this work, we introduce a model, called \modelnamePM{} that  expresses the output as a polynomial expansion of the input elements. \modelnamePM{} leverages \emph{solely} linear and multilinear operations, which avoids the requirement for activation functions. At the core of our model lies the \layer, which captures multiplicative interactions inside the token elements (i.e. input). Through a comprehensive evaluation, we demonstrate that \modelnamePM{} surpasses recent polynomial networks, showcasing performance levels outperforms modern transformers models across a range of challenging benchmarks in image recognition. We anticipate that our work will further encourage the community to reconsider the role of activation functions and actively explore alternative classes of functions that do not require them. Lastly, we encourage the community to extend our illustration of the polynomial ODE solver in order to tackle scientific applications with \shortpolyname.

\textbf{Limitation}: 
A theoretical characterization of the polynomial expansions that can be expressed with \modelnamePM{} remains elusive. In our future work, we will conduct further theoretical analysis of our model.
We believe that such an analysis would further shed light on the inductive bias of the block and its potential outside of image recognition.

\newpage
\section*{Reproducibility Statement}
Throughout this study, we exclusively utilize publicly accessible benchmarks, guaranteeing that other researchers can replicate our experiments. Additionally, we provide comprehensive information about the hyperparameters employed in our study and strive to offer in-depth explanations of all the techniques employed. Our plan is to make the source code of our model open source once our work gets accepted. 

\section*{Acknowledgments}
We are thankful to Dr. Giorgos Bouritsas and the ICLR reviewers for their feedback and constructive comments. We are also thankful to Colby Fronk for his help in the NeuralODE symbolic representation restoration. We thank Zulip\footnote{\url{https://zulip.com}} for their project organization tool. This work was supported by the Hasler Foundation Program: Hasler Responsible AI (project number 21043). Research was sponsored by the Army Research Office and was accomplished under Grant Number W911NF-24-1-0048. This work was supported by the Swiss National Science Foundation (SNSF) under grant number 200021\_205011.
\bibliography{references}
\bibliographystyle{iclr2024_conference}
\newpage
\appendix
\appendix
\section*{Contents of the appendix}

The contents of the supplementary material are organized as follows: 
\begin{itemize}
    \item In \cref{sec:poly_mixer_app_proof_proposition}, we provide a technical proofs regarding the interactions learned by \layer{} and \block.
    \item In \cref{appendix:architecture}, we exhibit the details of the architecture design for our model, including the design of single-stage and multi-stage architecture.
    \item In \cref{appendix:PolyMLP}, we present the schematic of the \layer{} to provide a more intuitive understanding of its structure. Additionally, we provide detailed explanations of the \layer{} and \block{} components.
    \item In \cref{appendix:pyramid embedding}, we elaborate on the pyramid patch embedding method mentioned in \cref{subsec:Network Architecture}. We provide illustrative comparisons between the conventional patch embedding approach and our method, showcasing the differences in the representation of input data between the two approaches.
    \item In \cref{ssec:medical image classification}, We extend our method beyond natural images to the medical domain.
    \item In \cref{appendix:Inductive biases study}, we visualize the effective receptive field of our ImageNet-1K pre-trained model. We also visualize related models with same input image as a comparison.
    \item In \cref{appendix:NeuralODE Solver comparsion}, we follow the original evaluation experiment to further compare our method with existing NeuralODE Solver \citep{dupont2019augmented,chen2018neural}.
    \item In \cref{appendix:semantic segmentation}, we scrutinize our model further beyond image recognition. We conduct a semantic segmentation evaluation on ADE20K dataset. The result shows our proposed model also greatly surpasses previous \shortpolynamesingle{} models in semantic segmentation.
    \item In \cref{appendix:remeasure flops}, we measure computation cost using FLOPs. We notice that the tool impacts the number of FLOPs and indeed we observe that this resulted in  less accurate measurements of previous PN models. We utilize open-source code to reevaluate the computational costs of existing \shortpolynamesingle{} models. In this section, we present precise FLOPs (Floating Point Operations) values for a range of \shortpolyname, along with their corresponding performance on ImageNet1K.
     \item In \cref{appedix: Complexity Analysis}, we conduct complexity analysis of our proposed model.
    \item In \cref{appendix:imagenet}, we list the training setting and hyperparameter of our model used for ImageNet-1K training. We also conduct an error analysis using saliency maps in Section \cref{ssec:error analysis} to investigate the weaknesses of our model further.
    \item In \cref{appendix:add med}, we list the training settings and hyperparameters used for the medical image classification experiment and the fine-grained classification experiment. Additionally, we include the model settings table.
    \item In \cref{appendix:initial}, we perform experiments to examine the effects of various parameter initialization methods on the final performance of our model.
\end{itemize}

\newpage

\section{Proofs}
\label{sec:poly_mixer_app_proof_proposition}

In this section, we derive the proof of \cref{proposition:poly_mixer_multiplicative_interaction_per_layer} and also include a new proposition for the \block, along with the associated proof. 
\subsection[Proof of Proposition 1]{Proof of \cref{proposition:poly_mixer_multiplicative_interaction_per_layer}}
\label{ssec:poly_mixer_app_proof_multiplicative_proposition}
In this paper, we use a bold capital letter to represent a matrix, e.g., $\bm{X}, \bm{F}$, while a plain letter to represent a scalar. For instance, $X_{j, \rho}$ represents the element in the $j\myth$ row and the $\rho\myth$ column. The matrix multiplication of two matrices $\bm{F}, \bm{K}$ results in the following $j, \rho$ element: \\$[\bm{F}\cdot \bm{K}]_{j, \rho} = \sum_{q} F_{j,q} K_{q,\rho}$. 

As a reminder, the input is $\bm{X} \in \realnum^{d \times s}$, where $d \in \naturalnum$ is the length of a token and $s\in \naturalnum$ is the number of tokens. Thus, to prove the proposition, we need to show that products of the form $X_{\tau,\rho} X_{\psi,\rho}$ exist between elements of each token. 

\cref{eq:poly_mixer_core_equation_mu_layer} relies on matrix multiplications and a Hadamard (elementwise) product. Concretely, if we express the $j,\rho$ element of $\left(\bm{A} \bm{X}\right) \ast \left(\bm{B} \bm{D}\bm{X} \right)$, we obtain:
\begin{equation}
    [\left(\bm{A} \bm{X}\right) \ast \left(\bm{B} \bm{D}\bm{X} \right)]_{j,\rho} = \sum_{\tau=1,\psi=1}^d \sum_{\omega} A_{j,\tau} X_{\tau,\rho} B_{j,\omega} D_{\omega,\psi} X_{\psi,\rho} \;.
\end{equation}

Then, we can add the additive term and the matrix $\bm{C}$. 
If we express \cref{eq:poly_mixer_core_equation_mu_layer} elementwise, we obtain the following expression:
\begin{equation}
    [\bm{Y}]_{i,\rho} = \sum_{j=1}^m C_{i,j} \left\{\sum_{\tau=1,\psi=1}^d \sum_{\omega} A_{j,\tau} X_{\tau,\rho} B_{j,\omega} D_{\omega,\psi} X_{\psi,\rho} + \sum_{\tau=1}^d A_{j,\tau} X_{\tau,\rho}\right\}\;.
    \label{eq:poly_mixer_core_equation_mu_layer_elementwise_app}
\end{equation}

That last expression indeed contains sums of products $X_{\tau,\rho} X_{\psi,\rho}$ for every output element, which concludes our proof.

\subsection{Interactions of the \block}
\label{ssec:poly_mixer_app_interactions_polyblock_proposition}

As we mention in the main paper, a \block{} comprises of two \layer s, so one reasonable question is how the multiplicative interactions of the \layer{} extend in the \block. We prove below that up to fourth degree interactions are captured in such a block. To simplify the derivation, we focus on two consecutive \layer s as the \block. 

\begin{proposition}
\label{proposition:poly_mixer_fourth_degree_interaction_per_block}
The \block{} captures up to fourth degree interactions between elements of each token.
\end{proposition}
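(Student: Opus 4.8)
The plan is to exploit \cref{proposition:poly_mixer_multiplicative_interaction_per_layer} together with the fact that a \block{} is a composition of two \layer s, and to use the elementary principle that the degree of a composition of polynomial maps is the product of the degrees. First I would extract the precise quantitative content of the proof of \cref{proposition:poly_mixer_multiplicative_interaction_per_layer}: fixing a token (column) index $\rho$, \cref{eq:poly_mixer_core_equation_mu_layer_elementwise_app} shows that every output entry $[\bm{Y}]_{i,\rho}$ of a single \layer{} is a polynomial of degree \emph{at most two} in the token entries $\{X_{\tau,\rho}\}_{\tau=1}^d$, and — crucially — the multiplicative branch $(\bm{A}\bm{X})\ast(\bm{B}\bm{D}\bm{X})$ only multiplies entries carrying the \emph{same} column index $\rho$. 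Hence one \layer{} realizes a degree-two polynomial map that acts token-wise, never mixing distinct columns.

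Next I would treat the \block{} as the composition of two such maps, following the simplification already stated in the setup (two consecutive \layer s, omitting the intervening layer normalization and the spatial shift). Denote the first-layer output by $\bm{Z}=\bm{Y}^{(1)}$ and feed it into the second \layer{}, which has its own parameters $\bm{A}',\bm{B}',\bm{C}',\bm{D}'$ and expanded hidden size $m\epsilon$. Applying \cref{proposition:poly_mixer_multiplicative_interaction_per_layer} to the second layer gives that $[\bm{Y}^{(2)}]_{i,\rho}$ is degree at most two in $\{Z_{\tau,\rho}\}_\tau$. Substituting for each $Z_{\tau,\rho}$ its degree-two expression in $\{X_{\psi,\rho}\}_\psi$ — a substitution that stays inside the single column $\rho$ because both layers keep products within a token — yields a polynomial of degree at most $2\cdot 2 = 4$ in the token entries.

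To show the bound is tight, I would exhibit the top-degree monomials explicitly. In the second layer the product $(\bm{A}'\bm{Z})\ast(\bm{B}'\bm{D}'\bm{Z})$ multiplies two entries of $\bm{Z}$, each of which already contains, from the first layer, a degree-two term of the form $X_{\tau,\rho}X_{\psi,\rho}$; their product is a genuine degree-four monomial $X_{\tau,\rho}X_{\psi,\rho}X_{\alpha,\rho}X_{\beta,\rho}$ with all four indices sharing the token index $\rho$. I would then check that the remaining contributions cannot raise the degree: the additive shortcut $+\bm{A}\bm{X}$ inside each layer, the block-level skip connection, and the linear projections $\bm{C},\bm{C}'$ all contribute only monomials of degree $\le 4$. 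Together these establish ``up to fourth degree interactions between elements of each token.''

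The main obstacle is bookkeeping rather than conceptual: I must track index ranges carefully through the substitution to confirm that the degree-four monomials are interactions among elements of the \emph{same} token, i.e. that neither \layer{} has introduced a cross-column product along the way. A secondary point to address cleanly is the role of the omitted operations — layer normalization acts token-wise and the spatial shift is a linear relabeling/mixing of columns — so that neither affects the essential per-token degree count carried out on the two bare \layer s, which is exactly why the statement restricts attention to that simplified composition.
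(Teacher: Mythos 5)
Your proposal is correct and follows essentially the same route as the paper's proof in \cref{ssec:poly_mixer_app_interactions_polyblock_proposition}: substitute the elementwise degree-two expression of the first \layer's output into the Hadamard product $\left(\bm{A}^{(2)}\bm{X}^{(2)}\right)\ast\left(\bm{B}^{(2)}\bm{D}^{(2)}\bm{X}^{(2)}\right)$ of the second \layer{} and exhibit the resulting monomials $X_{\tau,\rho}X_{\psi,\rho}X_{\gamma,\rho}X_{\epsilon,\rho}$ within a single column $\rho$. Your additional remarks on the degree-four upper bound and on the token-wise nature of the omitted normalization and shift are sensible elaborations but do not change the argument.
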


\begin{proof}
    All we need to show is that products of the form $X_{\tau,\rho} X_{\psi,\rho} X_{\gamma,\rho} X_{\epsilon,\rho}$ appear. Using the insights from \cref{ssec:poly_mixer_app_proof_multiplicative_proposition}, we expect the fourth degree interactions to appear in the Hadamard product, so we will focus on the term  $\left(\bm{A}^{(2)} \bm{X}^{(2)}\right) \ast \left(\bm{B}^{(2)} \bm{D}^{(2)}\bm{X}^{(2)} \right)$, where the ${(2)}$ declares the weights and input of the second \layer. 

    The elementwise expression for $\bm{X}^{(2)}$ is directly obtained from \cref{eq:poly_mixer_core_equation_mu_layer_elementwise_app}. To simplify the expression, we ignore the additive term, since exhibiting the fourth degree interactions is enough. 
    The $q, w$ element of the expression is the following:
    \begin{equation}
        \left[\left(\bm{A}^{(2)} \bm{X}^{(2)}\right) \ast \left(\bm{B}^{(2)} \bm{D}^{(2)}\bm{X}^{(2)} \right)\right]_{w,\rho} = \sum_{q,\delta}  \sum_{\xi=1,\theta=1}^m \sum_{\alpha} C_{q, \xi} C_{\delta,\theta} A^{(2)}_{w,q} B^{(2)}_{w,\alpha} D^{(2)}_{\alpha,\delta} \left\{ f_1  \right\} \left\{ f_2  \right\}\;,
    \end{equation}
    where the expression $\left\{ f_1  \right\} \left\{ f_2  \right\}$ is the following:
    \begin{equation}
        \sum_{\tau=1,\psi=1, \gamma=1, \epsilon=1}^d \sum_{\omega, \omega^{\dagger}} A_{\xi,\tau} X_{\tau,\rho} B_{\xi,\omega} D_{\omega,\psi} X_{\psi,\rho}  A_{\theta,\gamma} X_{\gamma,\rho} B_{\theta,\omega^{\dagger}} D_{\omega^{\dagger},\epsilon} X_{\epsilon,\rho}\;.
    \end{equation}
    Notice that the last expression indeed contains products of the form $X_{\tau,\rho} X_{\psi,\rho} X_{\gamma,\rho} X_{\epsilon,\rho}$, which concludes the proof.
\end{proof}

\section{Multi-stage \modelnamePM{}}
\label{appendix:architecture}
\begin{figure}[htbp]
  \centering
  \includegraphics[width=0.9\columnwidth]{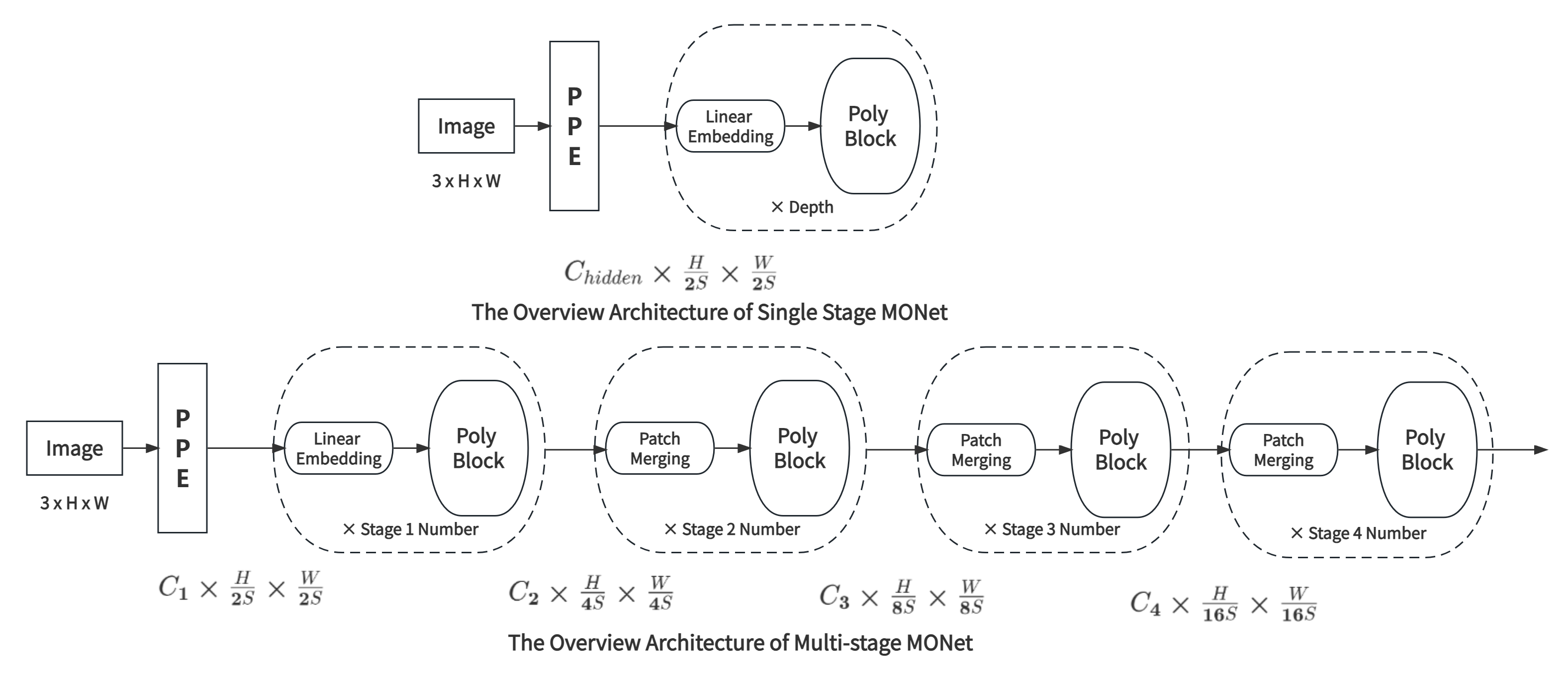}
  \caption{The  Schematic of (simple) \modelnamePM{} and Multi-stage \modelnamePM{}. PPE represents our pyramid patch embedding.} 
  \label{fig:Architecture}
\end{figure}

In deep architectures often a different number of channels or hidden size is used for different blocks~\citep{he2016resnet}. Our preliminary experiments indicate that \modelnamePM{} is amenable to different hidden size as well. Following recent works~\citep{chen2021cyclemlp,guo2022hire}, we set $4$ different hidden sizes across the network (referred to as stages) and we refer to this variant as the `Multi-stage \modelnamePM'. This variant is mostly used for large-scale experiments, since for datasets such as CIFAR10, and CIFAR100 a single hidden size is sufficient.

\section{Details of \layer{} and \block}
\label{appendix:PolyMLP}

\begin{figure}[htbp]
  \centering
  \includegraphics[width=0.7\columnwidth]{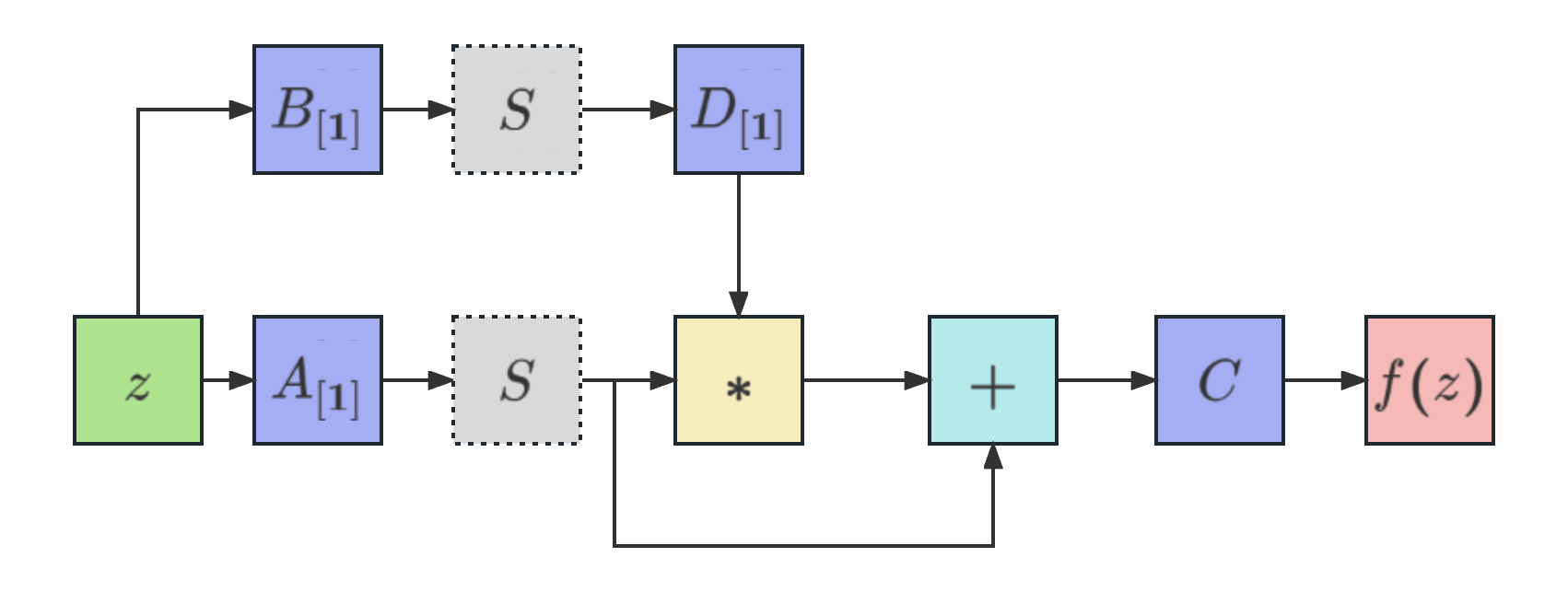}
  \caption{The  Schematic of \layer. Blue boxes correspond to learnable parameters. Green and red boxes denote input and output, respectively. The $\ast$ denotes the Hadamard
product, the $+$ denotes element-wise addition. The gray box denotes the spatial aggregation module, the dotted line represents it as an optional module. In our design the first \layer{} of each \block{} includes a spatial aggregation unit, while the second \layer{} does not.} 
  \label{fig:Poly MLP}
\end{figure}

In \cref{fig:Poly MLP}, we present the schematic of the \layer. The matrices $\bm{A}$, $\bm{B}$, $\bm{C}$, and $\bm{D}$ in \cref{fig:Poly MLP} are described in \cref{subsec:Polymlp}. As mentioned in \cref{subsec:Polymlp}, the \block{} serves as a larger unit module in our proposed model. 
In our implementation, the \block{} consists of two \layer{} modules. Only the first \layer{} in each block incorporates a spatial aggregation module, where we utilize a spatial shift operation \citep{yu2022s2}.

\section{Pyramid Embedding}
\label{appendix:pyramid embedding}

\begin{figure}[htbp]
  \centering
 \includegraphics[width=1\columnwidth]{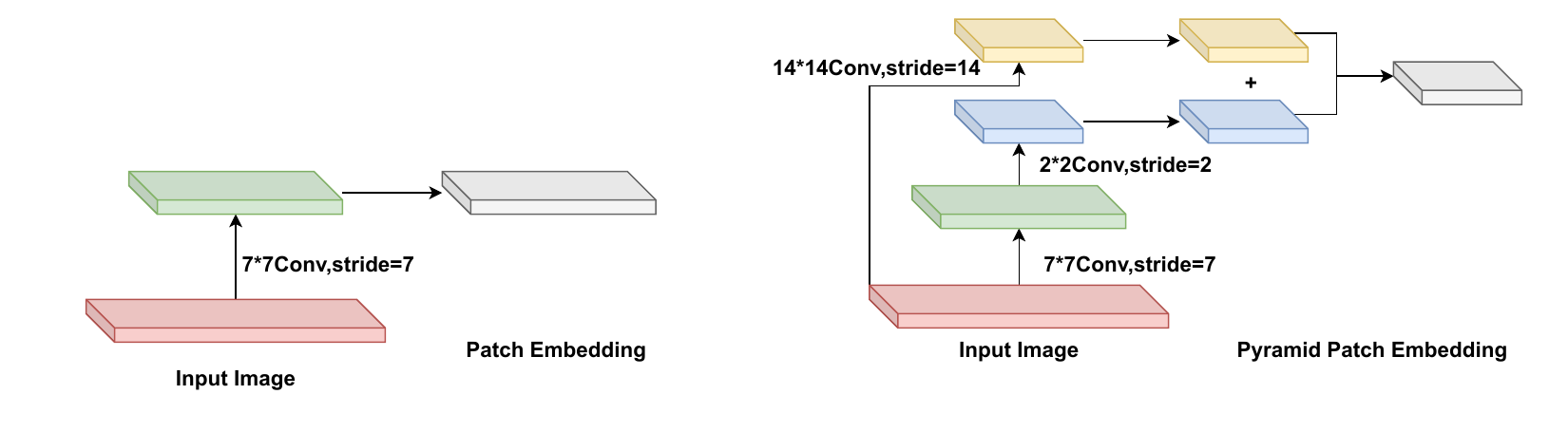}
  \caption{Pyramid Patch Embedding}
  \label{fig:Pyramid Patch Embedding}
\end{figure}
A Patch Embedding, which is typically used in the input space of the model,  
converts an image into a sequence of $N$ non-overlapping patches and projects to dimension $d$. %
If the original image has $(H,W)$ resolution, each patch $X_i$ has resolution $(P,P)$. Then, we obtain $S= H\times W / P^2$ patches, where $X_i \in \realnum^{P\times P\times d}$. For models based on MLP structure, smaller patches can capture finer-grained features better. However, due to the quadratic growth in computation caused by reducing the patch size, most models \citep{guo2022hire,chen2021cyclemlp}  do not follow this design. To reduce the computational burden by smaller patches, these networks assume multiple hidden sizes across the network, see \cref{appendix:architecture} for further details. Instead, we utilize an additional method for reducing the computational cost. After performing the first patch embedding on the original image using a small patch size, we can further reduce the input size by performing $2\times 2$ convolution with stride $2$. With this approach, we can achieve performance comparable to directly using a small patch size while significantly reducing computational costs and allowing us to adopt a more elegant and simple structure.

Inspired by FPN~\citep{lin2017fpn}, we introduce a different level of patch embedding to further enhance the performance. Each level of the pyramid represents an image at a different scale, with the lower levels representing images with higher resolution and the higher levels representing images with lower resolution. We use lower resolution image patch merged with downsampled higher-resolution image patch by element-wise addition. In \cref{subsec:patch embed text}, we compare our method with normal patch embedding. The embedding method we propose not only reduces network computational complexity but also improves the performance when compared to larger patch sizes.

We conduct 8 comparisons to prove the soundness of our method. We employ different patch embedding and patch sizes for the MLP-Mixer and our model. We train those models on ImageNet-100 dataset and report Top-1 Accuracy, their parameters and computation costs (FLOPs) in \cref{tab:polymixer embed} and \cref{tab:mlpmixer embed}.

\begin{table}[tb]
    \label{tab:polymixer_appendix_aux_table_patch_embedding_effect}
 \begin{minipage}{0.45\textwidth}
    \centering
    \setlength{\tabcolsep}{0.4\tabcolsep}
    \caption{Influence of the patch embedding on \modelnamePM.}
    \resizebox{\textwidth}{!}{%
    \begin{tabular}{@{}ccccc@{}}
    \toprule
    \textbf{Method} & \textbf{Patch size } & \textbf{Top-1 Acc(\%)} & \textbf{Paras (M)} & \textbf{FLOPs} \\
    \hline
    Patch Embedding & 7 & 83.08  & 27.77 & 28.14\\
    Patch Embedding & 14 & 79.54  & 27.94 & 7.07\\
    Pyramid Patch Embedding & 7 & 82.04 & 28.02 & 7.23\\
    Pyramid Patch Embedding & 14,7 &82.94 & 28.54 & 7.28 \\
    \hline
    \end{tabular}}
    \label{tab:polymixer embed}
\end{minipage}
\hspace{1cm}
\begin{minipage}{0.45\textwidth}
    \centering
    \setlength{\tabcolsep}{0.4\tabcolsep}
    \caption{The influence of the patch embedding approach on MLP-Mixer.}
    \resizebox{\textwidth}{!}{%
    \begin{tabular}{@{}ccccc@{}}
    \toprule
    \textbf{Method} & \textbf{Patch size } & \textbf{Top-1 Acc(\%)} & \textbf{Paras (M)} & \textbf{FLOPs} \\
    \hline
    Patch Embedding & 7 & 77.92  & 26.60 & 19.44\\
    Patch Embedding & 14 & 77.93  & 23.68 & 4.91\\
    Pyramid Patch Embedding & 7 & 79.64 & 24.50 & 5.18\\
    Pyramid Patch Embedding & 14,7 &79.96 & 24.80 & 5.26\\
    \hline
    \end{tabular}}
    \label{tab:mlpmixer embed}
    \end{minipage}
\end{table}

\cref{tab:polymixer_appendix_aux_table_patch_embedding_effect} indicates that our pyramid patch embedding improves the performance compared to normal patch embedding. By leveraging our proposed method, we can utilize small patch size to achieve better results while preventing quadratic growth of computation costs. This could be used as an ad-hoc module in other MLP models. At the same time, even with a normal patch embedding, our model still outperforms MLP-Mixer.

In the context of merging different levels of features, we initially adopt an elementwise addition approach. Additionally, we explore a U-Net-style approach \citep{ronneberger2015unet}. In this approach, we concatenate the features along the channel dimension and then reduced the dimension using a 1x1 depthwise convolution. However, after conducting experiments, we observe that this approach had limited improvement and was considerably less effective compared to the elementwise addition method.
\section{Medical Image Classification}
\label{ssec:medical image classification}
To assess the efficacy of our model beyond natural images, we conduct an experiment on the MedMNIST challenge across ten datasets~\citep{medmnistv1}. The dataset encompasses diverse medical domains, such as chest X-rays, brain MRI scans, retinal fundus images, and more. MedMNIST serves as a benchmark dataset for evaluating models in the field of medical image analysis, enabling us to evaluate the performance of our model across various medical imaging domains.

\begin{table}[tb]
\centering
\setlength{\tabcolsep}{0.5\tabcolsep}
\caption{\modelnamePM{} performance compared to ResNet18 on the MedMNIST benchmark, with detailed numbers. The best results are marked in \textbf{bold}.}
\resizebox{0.95\textwidth}{!}{%
\begin{tabular}{l|cccc|cccc|cc}
\toprule
Dataset   & \modelnamePMS{} & PDC & $\Pi$-Nets & MLP-Mixer & ResMLP & $S^{2}$MLP-D-S & $S^{2}$MLP-W-S & ResNet & AutoSklearn\\ 
\hline
Path      & \textbf{90.8} & 92.1 & 90.8 & 89.2 & 89.6 & 90.1 & 90.7 & 90.7 & 83.4\\
Derma     & \textbf{77.5} & \textbf{77.5} & 72.9 & 76.1 & 76.5 & 76.2 & 76.9 & 73.5 & 74.9\\
Oct       & \textbf{80.1} & 77.2 & 79.8 & 78.2 & 79.7 & 79.1 & 79.5 & 74.3 & 60.1\\
Pneumonia & \textbf{93.4} & 92.5 & 89.1 & 93.1 & 89.1 & 91.9 & 93.1 & 85.4 & 87.8\\
Retina    & \textbf{55.7} & 53.6 & 52.2 & 54.5 & 54.7 & 53.7 & 54.0 & 52.4 & 51.5\\
Blood     & \textbf{96.7} & 95.5 & 94.5 & 94.7 & 95.3 & 95.8 & 95.2 & 93.0 & 96.1\\
Tissue    & \textbf{67.7} & 67.5 & - & 65.9 & 66.8 & 65.9 & 66.0 & 67.7 & 53.2\\
OrganA    & \textbf{93.6} & 93.5 & 92.5 & 90.5 & 91.4 & 92.3 & 92.7 & 93.5 & 76.2\\
OrganC    & 88.9 & 93.0 & 89.3 & \textbf{90.6} & 87.5 & 88.9 & 89.4 & 90.0 & 87.9\\
OrganS    & \textbf{78.5} & 77.9 & 75.0 & 76.9 & 75.7 & 77.5 & 78.4 & 78.2 & 67.2\\
\bottomrule
\end{tabular}}
\label{tab:medmnist}
\end{table}

In our experiments, we train the variant of \modelnamePMS{} with 14.0M parameters and 0.68G FLOPs.
The results are shown in \cref{tab:medmnist} exhibit that our model outperforms other models in eight datasets.

\section{Visualization of representations from learned models}
\label{appendix:Inductive biases study}
In this section, we use visualizations to understand the difference between the learned models of MLP-Mixer, ResMLP and CNN architectures. The effective receptive field refers to the region in the input data that a neural network's output is influenced by, which has been proved as an effective approach to understand where a learned model focuses on. Specifically, we visualize the hidden unit effective receptive field of our model trained on ImageNet1k, i.e., the output of the last layer, and compare it with pretrained MLP-Mixer, ResMLP and Resnet in \cref{fig:erf}. We adopt a random image from ImageNet1K as input and visualize the effective receptive field. We can notably observe that due to the flattening of input tokens into one-dimensional structures by MLP-Mixer and ResMLP, the features they learn exhibit a distinct grid effect due to the loss of 2D information. They also tend to emphasize low-level texture information to a greater extent. ResNet's effective receptive field  is more discrete, encompassing both background and foreground elements to some extent, focusing on the global context. On the contrary, \modelnamePM{} focuses on the semantic parts (e.g. on the dog’s face) which achieves a balance between global and local context.
\begin{figure}[htbp]
\captionsetup{font=small}
  \centering
 \includegraphics[width=0.7\columnwidth]{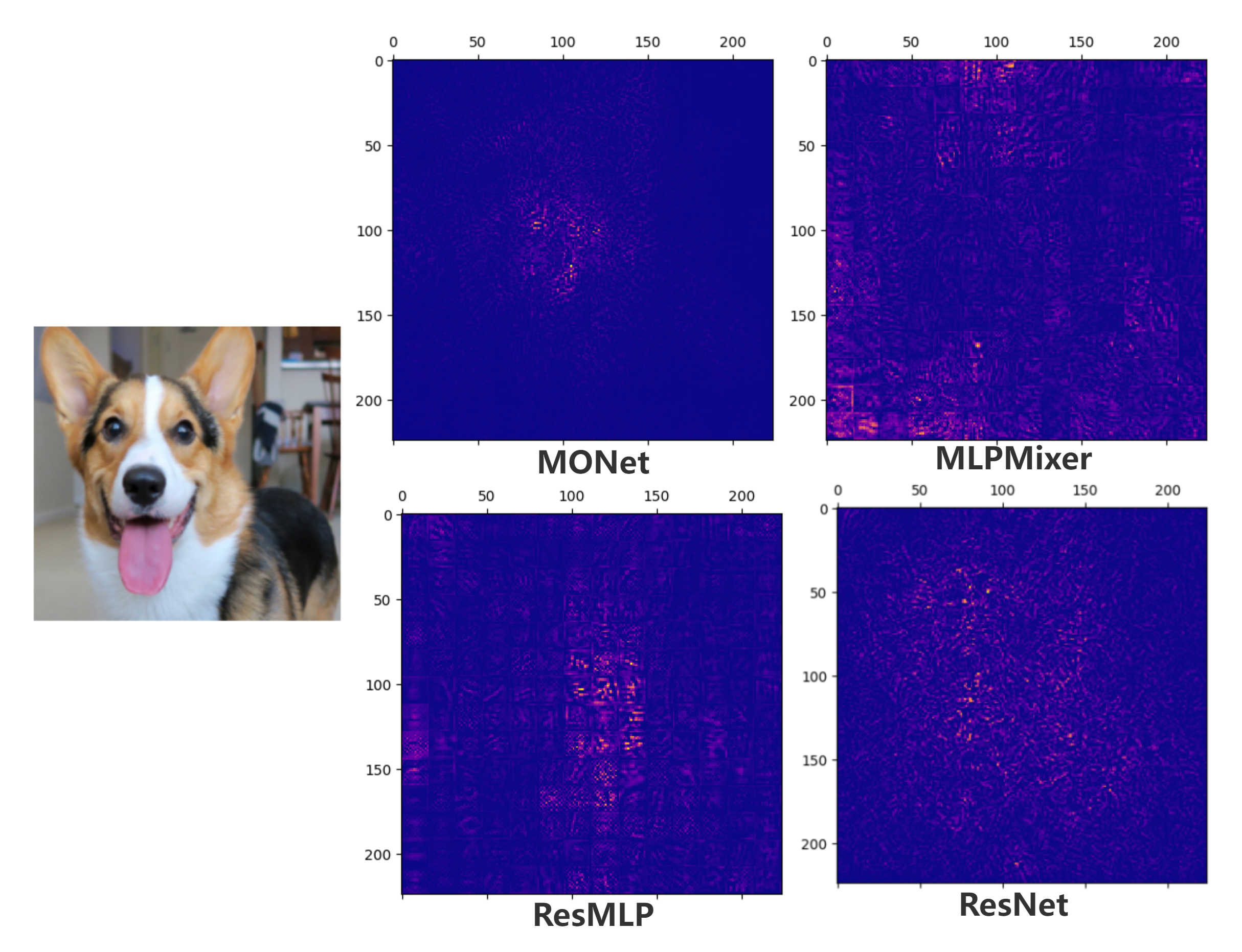}
   \caption{Input image(\textbf{Left}). The ERF of model pretrained on ImageNet-1K(\textbf{Right}). We visualize the effective receptive field of MLP-Mixer, ResMLP, ResNet and \modelnamePM{}. Our model more focus on the dog's face and the outline shape of dog, while the rest MLP models more focus on the low-level texture feature.}
\label{fig:erf}
 \end{figure}

\section{NeuralODE Solver comparsion}
\label{appendix:NeuralODE Solver comparsion}
Following the evaluation setup of the Neural-ODE network, we implement a Poly-Neural ODE based on our method, which can be used to solve ordinary differential equations for simulating physics problems. We conduct scientific computing experiments using simulated data, with the aim of predicting and interpreting the dynamics of physical systems

The specific task is to simulate moving the \textbf{randomly generated} inner sphere particles out of the annulus by pushing against and stretching the barrier. Eventually, since we are mapping discrete points and not a continuous region, the flow is able to break apart the annulus to let the flow through. The initial particle states are shown in Figure \cref{fig:initial}. 
\begin{figure}[htbp]
 \centering
    \includegraphics[width=0.3\textwidth]{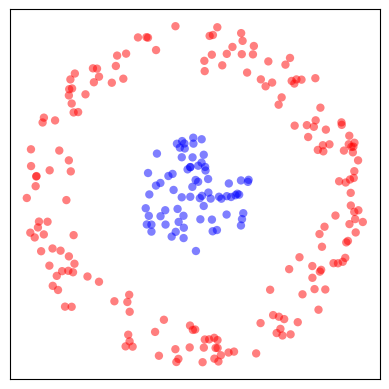}
    \caption{The initial particles states. The task is to move the inner sphere of blue particles out of the red particles.}
    \label{fig:initial}
\end{figure}

We set the model with hidden dimension 32 and train on simulated data for 12 epoches. The revolution of the first 10 epochs are shown in \cref{fig:move}.
\begin{figure}[htbp]
 \centering
    \includegraphics[width=0.7\textwidth]{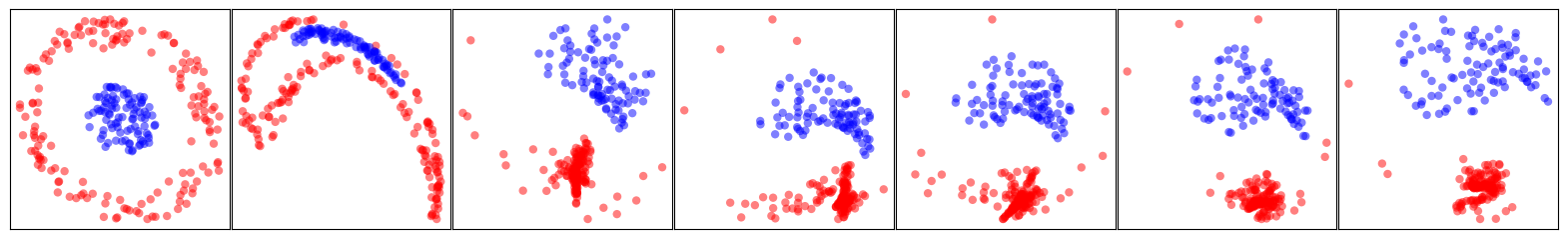}
    \caption{The move of particles. From left to right it's feature evolution with iterations increase. }
    \label{fig:move}
\end{figure}
\begin{figure}[thbp]
 \centering
    \includegraphics[width=0.3\textwidth]{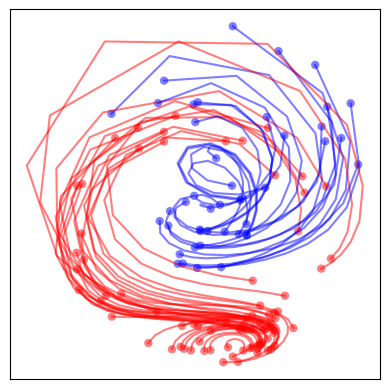}
    \caption{The predicted trajectory of particles. The lines are their history trajectory.}
    \label{fig:traj}
\end{figure}

We implement two models based on NeuralODE \citep{chen2018neural} and  Augmented Nerual ODE \citep{dupont2019augmented}, called Poly NerualODE(PolyNODE) and Poly Augmented NeuralODE(Poly ANODE). For Poly NeuralODE, higher-order model could achieve better accuracy with the cost of higher computation cost, the following experiment are based on minimal order $2$ Poly NeuralODE.

We first compare NODE with Poly NODE, the loss plots shown in \cref{fig:ode loss}. We can observe that with 40 epochs training, The Poly NODE approximates the functions faster and more accurately.  
\begin{figure}[htbp]
 \centering
    \includegraphics[width=0.6\textwidth]{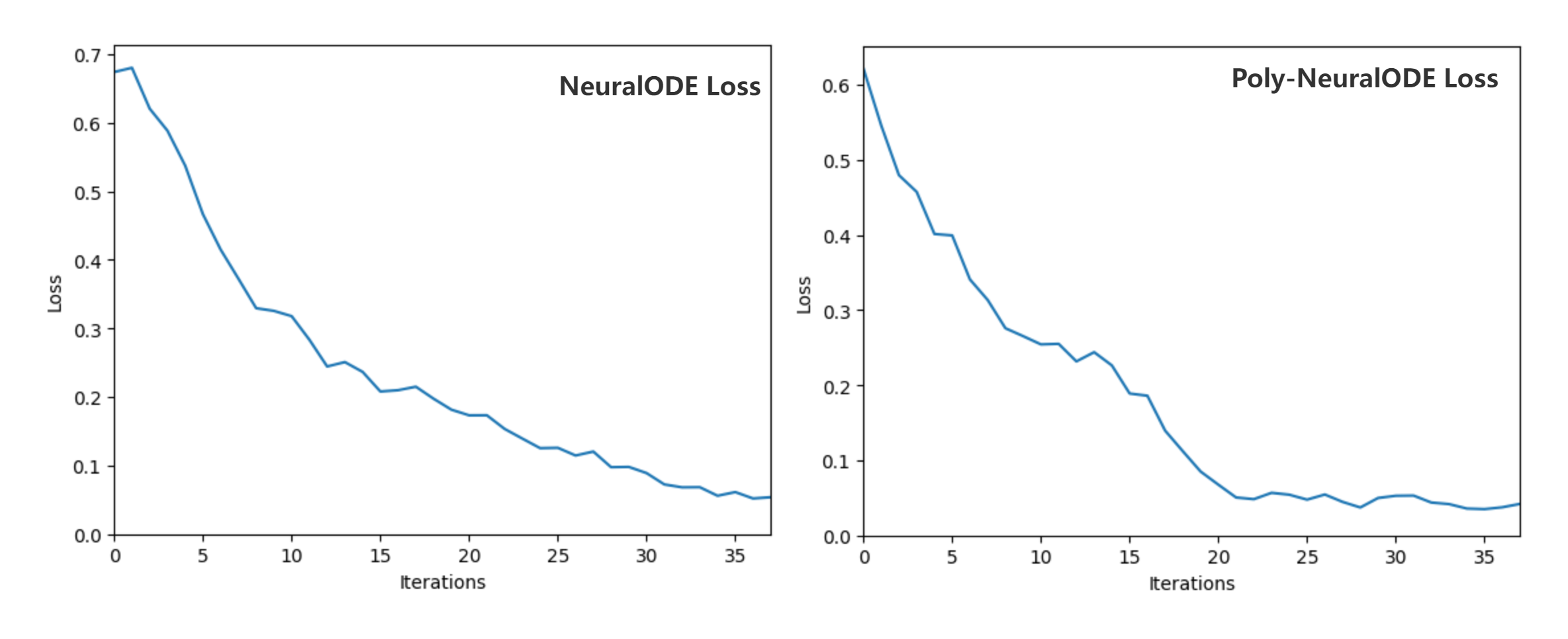}
    \caption{ Loss plots for NODEs and ANODEs trained on  2d data. Poly NODEs easily approximate the functions and are consistently faster than NODEs. }
    \label{fig:ode loss}
\end{figure}
\begin{figure}[htbp]
 \centering
    \includegraphics[width=0.6\textwidth]{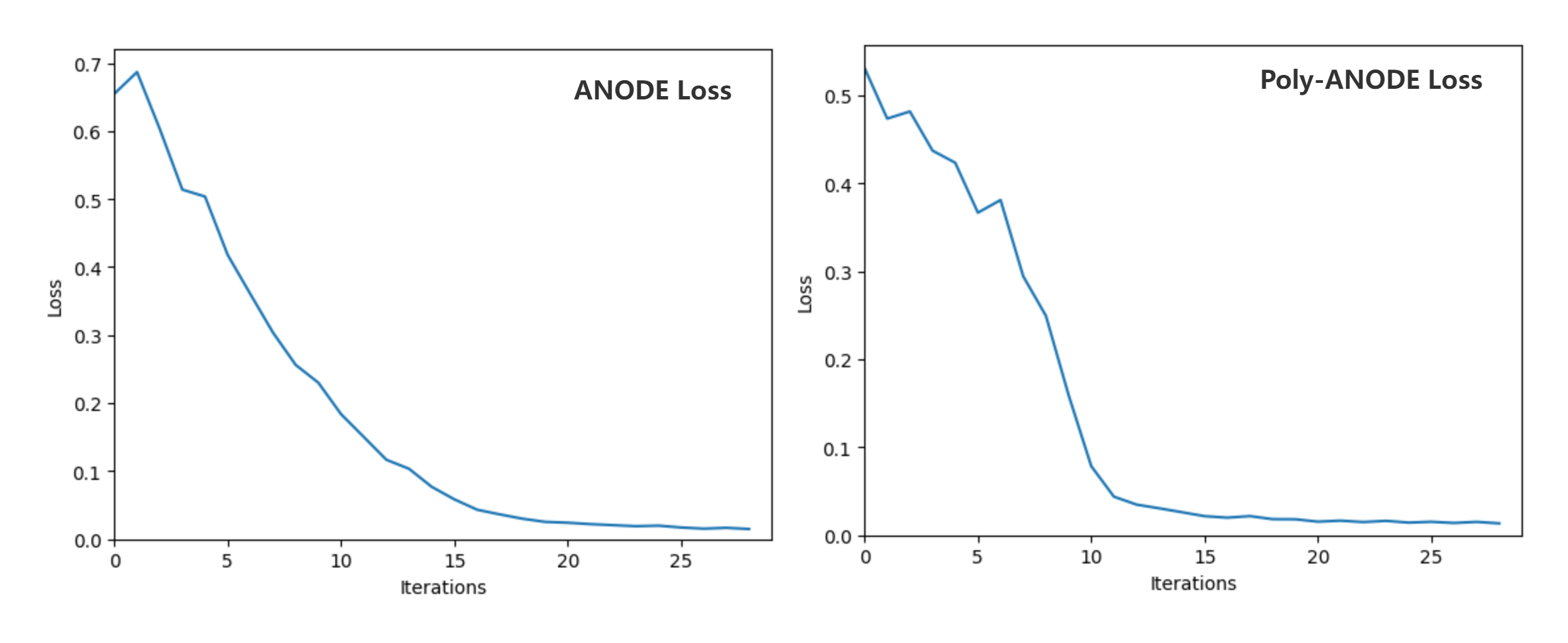}
    \caption{ Loss plots for Poly ANODEs and ANODEs trained on  2d data. Poly NODEs easily approximate the functions and are consistently faster than NODEs. }
    \label{fig:anode loss}
\end{figure}
We then compare ANODE with Poly ANODE. The loss plot trained 30 epoches is shown in \cref{fig:anode loss}. ANODEs augment the space on which the ODE is solved, allowing the model to use the additional dimensions to learn more complex functions using simpler flows. Our PolyANODE inherent its advantage while converge faster.

\section{Semantic Segmentation}
\label{appendix:semantic segmentation}
\textbf{Settings.} To further explore the performance of \modelnamePM{} in downstream task. We conduct semantic segmentation experiments on ADE20K dataset and present the final performance compared to previous models. The \cref{tab:semantic} shows the result. Previous PNN models exhibited poor performance on downstream tasks. Our model has overcome this issue and achieved results comparable to some state-of-the-art models.

\begin{table}[htbp]
\label{tab:semantic}
\centering
\caption{\textbf{Semantic segmentation on ADE20K val},All models use Semantic FPN head. The transformer and ResNet models result from the original paper. The polynomial networks result from the reproduced experiment.}
\begin{tabular}{c|c}
\hline
\multirow{2}{*}{Backbone} & Semantic FPN \\ \cline{2-2} 
                          & mIoU(\%)     \\ \hline
ResNet18 \citep{he2016resnet}                  & 32.9         \\ 
FCN \citep{long2015fully}& 29.3\\ \hline
PvT-Tiny \citep{wang2021PVT}                 & 35.7         \\
Seg-Former \citep{xie2021segformer}                & 37.4         \\ \hline
R-PDC \citep{chrysos2022augmenting}                    & 20.7         \\ 
\newmodelnamePI \citep{chrysos2023regularization}                & 19.4         \\ 
Multi-stage \modelnamePMB{}        & 37.5        \\ \hline
\end{tabular}

\end{table}

\section{Computation Cost compared to previous polynomial models}
\label{appendix:remeasure flops}
FLOPs (Floating-Point Operations Per Second) is commonly used metric to measure the computational cost of a model. However, tools like flops-counters and mmcv, which use PyTorch's module forward hooks, have significant limitations. They are accurate only if every custom module includes a corresponding flop counter. 

In this paper, we adopt fvcore Flop Counter\footnote{\href{https://github.com/facebookresearch/fvcore/blob/main/docs/flop_count.md}{fvcore flops counter}} developed by Meta FAIR, which provides the first operator-level FLOP counters. This tool observes all operator calls and collects operator-level FLOP counts. For models like polynomial models that involve numerous custom operations, an operator-level counter will give more accurate results. We reproduced previous polynomial models according to their open-source code and re-measure their computation cost with fvcore flops counter. 
This causes a slight difference from the FLOPs reported in previous \shortpolyname. 
For instance, operations such as custom normalization modules and Hadamard product computations were often overlooked due to former tool limitations. \emph{We adopt the  number taken from original papers in \cref{tab:imagenet-benchmark-updated} and we have corrected and updated the FLOPs report for recent polynomial models in the table below.} In comparison to previous work, our model has significant performance benefits while \emph{only incurring 20\% of the computational cost} of the previous state-of-the-art work.
Simultaneously, our performance on ImageNet-1K surpasses the previous state-of-the-art by \textbf{10\%}.
\begin{table}[htbp]
\centering
\caption{The re-measure computational costs for polynomial models, along with their corresponding Top-1 accuracy on ImageNet-1K}
\begin{tabular}{c|c|c}
\hline
Model                & FLOPs(B) & Accuracy(\%) \\ \hline
PDC                  & 30.78    & 71.0         \\ \hline
R-PDC                & 39.56    & -            \\ \hline
\newmodelnamePI           & 29.34    & 70.2         \\ \hline
\modelnamedense           & 26.41    & 70.0         \\ \hline
\modelnamePMS{} (Ours)         & 3.6      & 76.0         \\ \hline
\modelnamePMB{}(Ours)          & 11.2     & 79.6         \\ \hline
Multi-stage \modelnamePMS{}(Ours)  & 2.8      & 77.0         \\ \hline
Multi-stage \modelnamePMB{}(Ours)  & 6.8      & 81.3         \\ \hline
\end{tabular}
\end{table}

\section{Complexity Analysis}
\label{appedix: Complexity Analysis}
To simplify the process, we conduct a complexity analysis of  \modelnamePM{} adopted the single-stage design as follows. 

\textbf{Pyramid Patch Embedding Layer (PPEL)} crops the input raw image $I \in \mathbb{R}^{H \times W \times 3}$ where $W$ is the width and $H$ is the height into several non-overlapping patches of dimension $P \in \mathbb{R}^{p \times p \times 3}$. Consider 1-level embedding,Our method conducts compression to the patches with a $2 \times 2$ convolution layer, resulting in the  token $W_{0} \in \mathbb{R}^{p\times p \times c}$, token $W_1\in \mathbb{R}^{\frac{p}{2}\times\frac{p}{2}\times c}$,where $W_0$ is the output of the first convolution layer, $W_1$ is the output of the second convolution layer. Noted that the  final number of patches are $\bm{N_p} = \frac{W}{2p} \times \frac{H}{2p}$ which is 25\% of normal patch embedding. Thus, the total number of parameters of the Pyramid embedding is:
\begin{equation}
    Params_{PPEL} = ({3}p^2+1)c + c(4c+1) = ({3}p^2+4c + 2)c
\end{equation}

 In summary the floating operations in PPEL is 
\begin{equation}
    FLOPs_{PPEL} = 3\times\frac{W}{p}\times\frac{H}{p} cp^2 + 4\times c^2 \times \frac{W}{2p}\times\frac{H}{2p} = 4c{N_p}(3p^2+c)
\end{equation}

\textbf{Poly Blocks(\textbf{PB})}: Our model consists of $N$ identical Poly Blocks. Each blocks contains 2 Poly Layer. The first Poly Layers\textbf{($PL_{1}$)} contains two Spatial Shift module in two branches, and the second Poly Layers doesn't contain Spatial Shift module. We denotes that first Poly Layer four fully connected layer as ${W_1,b_1},{W_2,b_2},{W_3,b_3},{W_4,b_4}$ and shrinkage ratio as $s$, where $W_1\in \mathbb{R}^{c\times c},W_2\in \mathbb{R}^{c\times\frac{c}{s}},W_3\in \mathbb{R}^{\frac{c}{s}\times c},W_4\in \mathbb{R}^{c\times c}$. Therefore, the total parameter numbers of  the first Poly Layers is 
\begin{equation}
    Params_{PL_1} = c\times(2c+ 2\frac{c}{s}) +3c +  \frac{c}{s} = c^2(2+\frac{2}{s}) + c(3+\frac{1}{s})
\end{equation}
The second Poly Layer is similar, while we have a expansion rate $r$ similar to other MLP models.We denotes that second Poly Layer four fully connected layer as ${W_5,b_5},{W_6,b_6},{W_7,b_7},{W_8,b_8}$ and shrinkage ratio as $s$, where $W_5\in \mathbb{R}^{c\times rc},W_2\in \mathbb{R}^{c\times\frac{rc}{s}},W_3\in \mathbb{R}^{\frac{rc}{s}\times rc},W_4\in \mathbb{R}^{rc\times c}$. We get the total parameter numbers of the second Poly Layers is 
\begin{equation}
    Params_{PL_2} = c\times(2rc+ \frac{rc}{s}+\frac{r^2c}{s}) +2rc +c + \frac{rc}{s}= c^2(2r+\frac{r+r^2}{s}) + c(2r+\frac{1}{s}+1)
\end{equation}
Hence, the total number of a Poly Block is
\begin{equation}
    Params_{PB} = Params_{PL_1} + Params_{PL_2} = c^2(2r+2+\frac{r^2+r+2}{s})+c(4+2r+\frac{2}{s}) 
\end{equation}
Besides the fully connected layer, the hadamard product bring extra computation where the flops of hadamard product of two matrices in shape $n\times m$ is $n\times m$. The overall hadamard product flops is 
\begin{equation}
    FLOPs_{Prod} = {N_p}c + {N_p}rc = {N_p}c(1+r)
\end{equation}
and the flops becomes
\begin{equation}
    FLOPs_{PB} = {N_p}c^2(2r+2+\frac{r^2+r+2}{s}) + FLOPs_{Prod} = {N_p}c^2(2r+2+\frac{r^2+r+2}{s}) +{N_p}c(1+r)
\end{equation}

\textbf{Fully-connected classification layer(FCL)}: takes input $c$-dimensional vector and feedforward to average pooling layer. The output vectore is in $k$ dimension where $k$ is the number of classes. In summary, the number of parameters of FCL is:
\begin{equation}
    Params_{FCL} = k(c+1)
\end{equation}
The flops of FCL is:
\begin{equation}
    FLOPs_{FCL} = {N_p}ck
\end{equation}

\textbf{Overall architecture}: We conclude that the summary of parameters of overall of architecture. The total numbers of parameters of our architecture is:
\begin{equation}
    Params = Params_{PPEL} + N\times Params_{PB} + Params_{FCL}
\end{equation}
The flops of our architecture is:
\begin{equation}
    FLOPs = FLOPs_{PPEL} + N\times FLOPs_{PB} + FLOPs_{FCL}
\end{equation}
\newpage
\section{\imagenet{} Classification}
\label{appendix:imagenet}
\subsection{\imagenet{} Training setting}
 The \cref{tab:imagenettrain} shows our experiment setting for training \modelnamePM{} on ImageNet1K dataset. Our implementation for the optimizer and data augmentation follow the popular timm library\footnote{\href{https://github.com/huggingface/pytorch-image-models}{Hugging Face timm}}.
\label{ssec:imagenettrain}
\begin{table}[htbp]
\label{tab:imagenettrain}
\caption{\imagenet{} Training Settings in \cref{ssec:poly_mixer_experiment_imagenet}}
\centering
\begin{tabularx}{0.5\textwidth}{c|>{\centering\arraybackslash}X}
\toprule
 & \imagenet{} Training Setting \\
\midrule
optimizer & AdamW \\
base learning rate & 1e-3 \\
\multirow{2}{*}{weight decay} &
0.01 (Multi-stage \modelnamePM{}-T) \\&0.02 (Multi-stage \modelnamePM{}-S) \\
batch size & 448 \\
training epochs & 300 \\
learning rate schedule & cosine \\
warmup & \ding{51} \\
label smoothing & 0.1 \\
auto augmentation & \ding{51} \\
random erase & 0.1 \\
cutmix & 0.5 \\
mixup & 0.5 \\
\bottomrule
\end{tabularx}
\end{table}
\subsection{Error Analysis}
\label{ssec:error analysis}
We utilize our best-performing \modelnamePM{} model to compute per-class accuracy rates for all 1000 classes on the validation dataset of \imagenet. In \cref{tab:misImagenet}, we present the top 10 least accurate and misclassified classes. Additionally, in \cref{fig:mis1} we showcase the images of the most misclassified class (laptop). In \cref{fig:mis2}, we show a failure case of the screen. In \cref{fig:mis4}, we show a failure case of tennis. In \cref{fig:corr5}, we show a successful case of small object sunglasses.

\begin{table}[htbp]
\caption{The top-10 least accurate classes and their labels}
\label{tab:misImagenet}
\centering
\begin{tabular}{|c|c|c|c|}
\hline %
Class Name & Accuracy (\%) & Class Name & Accuracy (\%) \\ \hline
tiger Cat        & 20     &  laptop,laptop computer          & 24\\\hline
screen        & 24     &  chiffionier,commode        & 28\\\hline
sunglasses        & 28     &    cassette player       & 30\\\hline
letter opener        & 30     &  malliot        & 32\\\hline
projectile,misslle        & 32     &  spotlight         & 32\\ \hline %
\end{tabular}
\end{table}
\begin{figure}[htbp]
 \centering
    \includegraphics[width=0.7\textwidth]{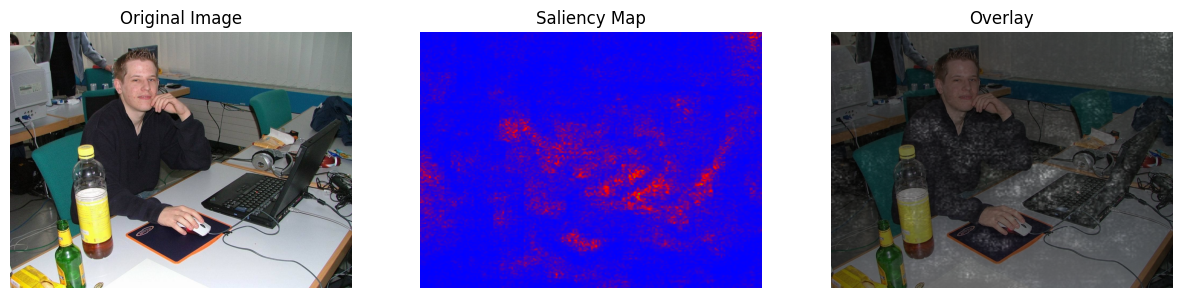}
    \caption{Failure case of \modelnamePM{} as explained from the saliency map. The red figure in the middle denotes the saliency map, which is the area where the model focuses on. The correct class for the image is \textcolor{red}{laptop}, while the saliency map shows the model focuses on the person. This is not an unreasonable error though, since the main figure lies in the center of the image.} 
    \label{fig:mis1}
\end{figure}

\begin{figure}[htbp]
 \centering
    \includegraphics[width=0.7\textwidth]{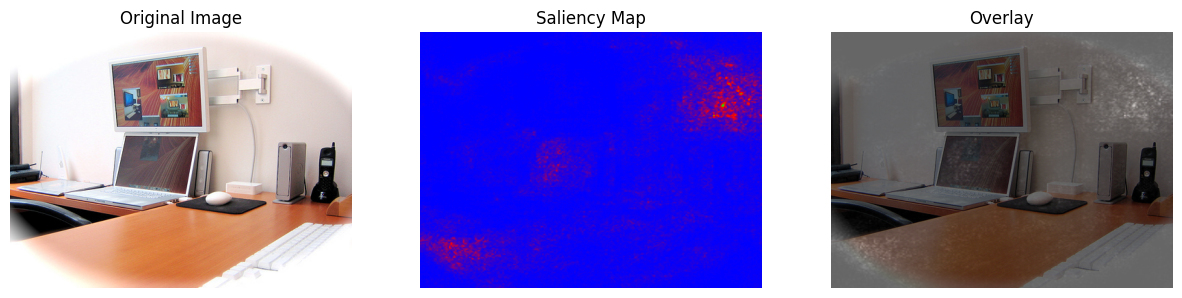}
    \caption{Failure case of \modelnamePM{} as explained from the saliency map. The red figure in the middle denotes the saliency map, which is the area where the model focuses on. The correct class for the image is \textcolor{red}{screen}, while the saliency map shows the model focusing on the wall. }
    \label{fig:mis2}
\end{figure}
\begin{figure}[htbp]
 \centering
    \includegraphics[width=0.7\textwidth]{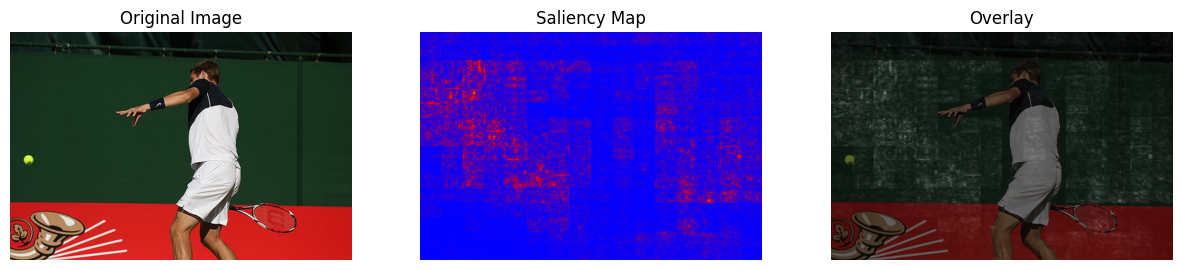}
    \caption{Failure case of \modelnamePM{} as explained from the saliency map. The red figure in the middle denotes the saliency map, which is the area where the model focuses on. The correct class for the image is \textcolor{red}{tennis}, while the saliency map shows the model focusing on the background wall, instead of on the small object. }
    \label{fig:mis4}
\end{figure}
\begin{figure}[htbp]
 \centering
    \includegraphics[width=0.7\textwidth]{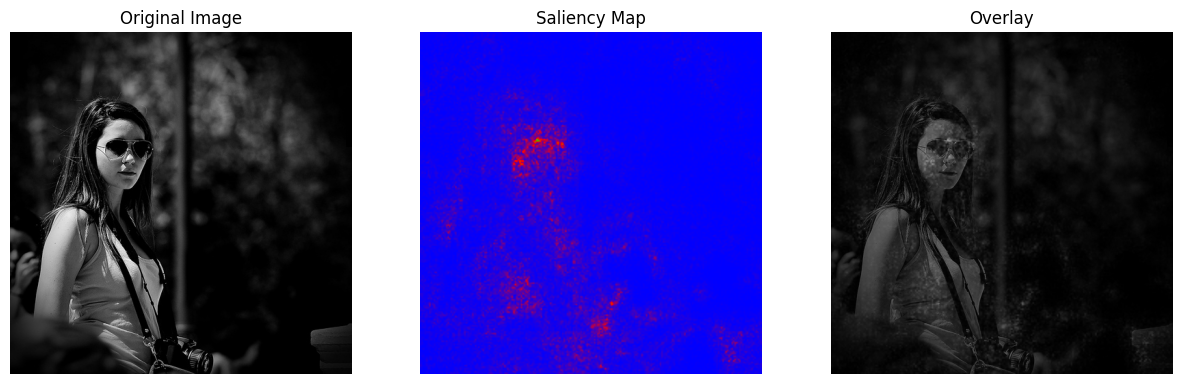}
    \caption{Success case of \modelnamePM{} as explained from the saliency map. The correct class for the image is \textcolor{red}{sunglasses}, The saliency map shows the model focusing on the face of the woman. In some cases, successful classification of a category can be achieved based on global information, even if the object occupies only a small portion of the image. }
    \label{fig:corr5}
\end{figure}
Our analysis dictates that the majority of classification failures occur when the model fails to focus on the main object. This is even more pronounced in cases where the classification label is provided by an item occupying a small portion of the image. We believe this is an interesting phenomenon that is shared with similar models relying on linear operations and not using convolutions. 
\newpage

\subsection{\rebuttal{Sources for baselines}}

\rebuttal{To further increase the transparency of our comparisons, we clarify the details on the main results on \imagenet. We aim for reporting the accuracy and the other metrics using the respective papers or public implementations. Concretely: }
\begin{itemize}
    \item \rebuttal{The paper of \citet{chrysos2022augmenting} is used as the source for PDC.}
    \item \rebuttal{\citet{chrysos2023regularization} is used as the source for ResNet18,Pi-Nets, Hybrid Pi-Nets, \newmodelnamePI{} and \modelnamedense. We clarify here once more that \modelnamePI{} refers to the model \emph{without} activation functions, while `Hybrid \modelnamePI' refers to the model with activation functions.}
    \item \rebuttal{\citet{yu2022s2} is used as the source of ResNet50 performance. We updated the number to 77.2\% (on \imagenet) to align with \cite{he2016resnet}.}
    \item  \rebuttal{The performance data for the other models in the \cref{tab:imagenet-benchmark-updated} are sourced from their respective papers.}
\end{itemize}

\section{Additional benchmarks and Medical Image Classification}
\label{appendix:add med}
We introduce dataset details including dataset information for various datasets used for the empirical validation of our method.

In this section, we give a brief introduction to the dataset we use and the overview statistics of datasets are shown in \cref{tab:dataset overview}.

\textbf{CIFAR10}: CIFAR-10 \citep{krizhevsky2009cifar10} is a well-known dataset in the field of computer vision. It consists of 60,000 labeled images that are divided into ten different classes. Each image in the CIFAR-10 dataset has a resolution of 32x32 pixels and is categorized into one of the following classes: airplane, automobile, bird, cat, deer, dog, frog, horse, ship, or truck. The images in the CIFAR10 dataset are of size $32\times 32$ pixels. We keep its original resolution for training.

\textbf{SVHN}:SVHN \citep{netzer2011svhn} stands for Street View House Numbers. It is a widely used dataset for object recognition tasks in computer vision. The SVHN dataset consists of images of house numbers captured by Google's Street View vehicles. These images contain digit sequences that represent the house numbers displayed on buildings. It contains over 600,000 labeled images, which are split into three subsets: a training set with approximately 73,257 images, a validation set with around 26,032 images, and a test set containing 
approximately 130,884 images. The images in the SVHN dataset are of size $32\times 32$ pixels, we keep its original resolution for training.

\textbf{Oxford Flower}: The Oxford Flower \citep{oxfordflower} dataset, also known as the Oxford 102 Flower dataset, is a widely used collection of images for fine-grained image classification tasks in computer vision. It consists of 102 different categories of flowers, with each category containing a varying number of images. The Oxford Flower dataset provides a challenging testbed for researchers and practitioners due to the high intra-class variability among different flower species. This variability arises from variations in petal colors, shapes, and overall appearances across different species. The images in the Oxford Flower dataset are of size $256\times 256$ pixels, in our training, we use bicubic interpolation to resize all images to $224\times 224$ pixels.

\textbf{Tiny-Imagenet}: Tiny ImageNet\citep{le2015tinyimagenet} is a dataset derived from the larger ImageNet dataset, which is a popular benchmark for object recognition tasks in computer vision. The Tiny ImageNet dataset is a downsized version of ImageNet, specifically designed for research purposes and computational constraints. While the original ImageNet dataset consists of millions of images spanning thousands of categories, the Tiny ImageNet dataset contains 200 different classes, each having 500 training images and 50 validation and test images. This results in a total of 100,000 labeled images in the dataset. The images in the Tiny ImageNet dataset are of size $64\times 64$ pixels, we keep its original resolution for training.

\textbf{MedMNIST}: MedMNIST \citep{medmnistv1} is a specialized dataset designed for medical image analysis and machine learning tasks. It is inspired by the popular MNIST dataset, but instead of handwritten digits, MedMNIST focuses on medical imaging data. The MedMNIST dataset includes several sub-datasets, each corresponding to a different medical imaging modality or task. Some examples of these sub-datasets are ChestX-ray, Dermatology, OCT (Optical Coherence Tomography), and Retinal Fundus. Each sub-dataset contains labeled images that are typically 28x28 pixels in size, resembling the format of the original MNIST dataset. We keep the original size for training. Due to the presence of both the RGB sub-dataset and grayscale sub-dataset in MedMNIST, we employed different model configurations to accommodate this variation. The PathMNIST, DermaMNIST, BloodMNIST, and RetinaMNIST are RGB images, the rest 7 dataset are greyscale images. 

\textbf{ImageNet-C}: ImageNet-C \citep{hendrycks2018benchmarking} is a dataset of 75 common visual corruptions. This dataset serves as a benchmark for evaluating the resilience of machine learning models to different forms of visual noise and distortions. This benchmark provides a more comprehensive understanding of a model's performance and can lead to the development of more robust algorithms that perform well under a wider range of scenarios. The metric used in ImageNet-C is mean corruption error, which is the average  error rate of 75 common visual corruptions of ImangeNet validation set. The lower the mCE, the better the result.

To ensure a fair comparison with the original paper that presented ResNet18, we followed the training protocol outlined in the MedMNIST paper. We trained our model for 100 epochs without any data augmentation, using early stopping as the stopping criterion.

\begin{table}[htbp]
\label{tab:dataset overview}
\caption{The overview of the dataset we use in \cref{ssec:medical image classification} and \cref{ssec:poly_mixer_experiment_fine_grained}. The * in the below table since MedMNIST dataset consists of multiple sub-datasets, each containing medical images from specific categories. These sub-datasets have varying numbers of training samples, testing samples, and classes.}
\setlength{\tabcolsep}{0.3\tabcolsep}
\resizebox{1\columnwidth}{!}{%
\begin{tabular}{|c|c|c|c|c|c|}
\toprule
Dataset       & CIFAR10        & SVHN    & Oxford Flower & Tiny-Imagenet  & MedMNIST       \\ \hline
Classes       & 10             & 10      & 102           & 200            & *              \\ \hline
Train samples & 50000          & 73257   & 1020          & 100000         & *              \\\hline
Test samples  & 10000          & 26032   & 6149          & 10000          & *              \\\hline
Resolution    & 32             & 32      & 256           & 64             & 28             \\\hline
Attribute     & Natural Images & Numbers & Flowers       & Natural Images & Medical Images \\\hline
Image Type    & RGB            & RGB     & RGB           & RGB            & RGB+Greyscale \\ \bottomrule
\end{tabular}}
\end{table}

\section{Initialization}
\label{appendix:initial}
We evaluate various initialization methods for the parameters. The results in \cref{tab:initialization} indicate that the Xavier Normal initialization method yields the most favorable results. By adopting the Xavier Normal initialization, we observe improvements in the performance of the model trained on CIFAR-10.  Our result in the main paper use Xavier Normal as our default initialzation.
Previous works on PNs~\citep{chrysos2022augmenting, chrysos2023regularization} have demonstrated the crucial role of appropriate parameter initialization in the final performance. In this section, we explore different parameter initialization methods for the linear layers in the model and train them on the CIFAR-10 dataset from scratch. To minimize the impact of confounding factors, all experiments are conducted without any data augmentation techniques or regularization methods. The results are shown in \cref{tab:initialization}.
\begin{table}[htbp]
    \centering
     \caption{Comparison of Initialization Methods. The best performance is marked in \textbf{bold}. Note that * in the table uses PyTorch default initialization, it depends on the layer type. For linear layer of shape $(out,in)$,  the values are initialized from $\mathcal{U}(-\sqrt{k},\sqrt{k})$, where $k=\frac{1}{in}$.}
     \label{tab:initialization}
        \begin{tabular}{|c|c|c|}
            \hline
            Initialization Method & Top-1 Acc (\%) & Top-5 Acc (\%) \\
            \hline
            Xavier Uniform \citep{glorot2010xavier}& 88.85 & 99.38 \\
            Xavier Normal \citep{glorot2010xavier}& \textbf{89.13} & \textbf{99.85} \\
            Kaiming Uniform \citep{he2015delving}& 88.56 & 98.70 \\
            Kaiming Normal \citep{he2015delving}& 88.72 & 99.46 \\
            Lecun Normal \citep{lecun2002efficient}& 88.95 & 99.51 \\
            Normal  & 88.06 & 99.34\\
            Sparse \citep{martens2010sparse} & 88.18 & 99.48\\
            Pytorch default*& 88.37& 99.49\\
            \hline
        \end{tabular}
\end{table}

\end{document}